\newtheorem{theorem}{Theorem}
\newtheorem{lemma}{Theorem}
\title{Fed-REACT: Federated Representation Learning for Heterogeneous and Evolving Data
%%%% Cite as
%%%% Update your official citation here when published 
% \thanks{\textit{\underline{Citation}}: 
% \textbf{Authors. Title. Pages.... DOI:000000/11111.}} 
}
\author{
  Yiyue Chen \quad
  Usman Akram \quad Chianing Wang \thanks{ Toyota InfoTech Lab USA,  johnny.wang@toyota.com} \quad Haris Vikalo \\
  Department of Electrical and Computer Engineering \\
  University of Texas at Austin
}
\begin{document}
\maketitle

\begin{abstract}
Motivated by the high resource costs and privacy concerns associated with centralized machine learning, federated learning (FL) has emerged as an efficient alternative that enables clients to collaboratively train a global model while keeping their data local. However, in real-world deployments, client data distributions often evolve over time and differ significantly across clients, introducing heterogeneity that degrades the performance of standard FL algorithms. In this work, we introduce Fed-REACT, a federated learning framework designed for heterogeneous and evolving client data. Fed-REACT combines representation learning with evolutionary clustering in a two-stage process: (1) in the first stage, each client learns a local model to extracts feature representations from its data; (2) in the second stage, the server dynamically groups clients into clusters based on these representations and coordinates cluster-wise training of task-specific models for downstream objectives such as classification or regression. We provide a theoretical analysis of the representation learning stage, and empirically demonstrate that Fed-REACT achieves superior accuracy and robustness on real-world datasets.
\end{abstract}

% keywords can be removed

\section{Introduction}

Distributed training of machine learning models has enabled significant advances across applications such as recommendation systems, image recognition, and conversational AI. Among distributed approaches, Federated Learning (FL) \citep{mcmahan2017communication} has garnered considerable attention for enabling collaborative, privacy-preserving training of a global model without requiring clients to share raw data.
However, classical algorithms like FedAvg \citep{mcmahan2017communication} assume independent and identically distributed (IID) data, which often does not reflect real-world scenarios. Since clients collect data asynchronously and in diverse environments, local datasets typically differ in both size and distribution, leading to statistical heterogeneity. This heterogeneity poses a major challenge for FL -- averaging updates from non-IID data can degrade global model performance and lead to poor local task accuracy \citep{zhao2018federated}. To address this, various techniques attempting to mitigate the effects of data heterogeneity have been proposed \citep{li2020federated}. Additionally, large-scale FL systems, such as those in cross-device settings, face further challenges including high communication costs and intermittent client availability. In response, client clustering and cluster-aware training strategies have been explored to improve both communication efficiency and learning performance \citep{mansour2020three, kim2021dynamic}.

In many real-world applications such as healthcare, autonomous driving, and finance, the data collected by clients evolves over time. While the above FL methods have proven effective for static heterogeneous data, most are not designed to handle evolving data characterized by an additional layer of heterogeneity arising from the temporal dimension. To tackle this, \citet{kim2021dynamic} proposed a framework that employs generative adversarial networks (GANs) to group users and dynamically update clusters without sharing raw data. However, this approach relies on clustering snapshots of temporal data, which can result in unstable cluster assignments and spurious detection of abrupt changes. An alternative is evolutionary clustering \cite{xu2014adaptive}, which incorporates historical information to produce smoother transitions and more stable cluster memberships over time.

%\subsection{Contribution of this work}
The contribution of this work can be summarized as follows:

\begin{itemize}

\item \textbf{A novel FL framework for evolving heterogeneous data:} To our knowledge, this is the first work to formally study federated self-supervised learning under both heterogeneous and evolving data conditions.
%\footnote{We use the term {\it evolutionary data} to emphasize that the data is not only temporal in nature, i.e. as time series data, but are also continuously collected by clients in an online manner.} 
In such settings, heterogeneity arises from two sources: inter-client distribution diversity, due to variations in data distributions across clients, and intra-client non-stationarity, stemming from temporal changes in data observed by each client. We propose Fed-REACT (\underline{Fed}erated learning method leveraging \underline{R}epresentation learning and \underline{E}volution\underline{A}ry \underline{C}lus\underline{T}ering), a novel two-phase framework. In the first phase, clients collaboratively learn meaningful feature representations via self-supervised learning. In the second phase, these representations are used to train task-specific models within dynamically evolving clusters of distributionally similar clients.

\item \textbf{Evolutionary clustering with adaptive forgetting:} To address intra-client heterogeneity, we introduce evolutionary clustering into federated learning and group clients based on the similarity of their task model weights. A key challenge lies in the high variability of weights, especially when local training is performed on small batches, which can destabilize clustering. To mitigate this, we propose an adaptive forgetting factor that enables clustering based on both current and historical model parameters. We further investigate strategies for aggregating cluster-specific models, including (a) time averaging and (b) weighted averaging with forgetting. These strategies are empirically evaluated in the results section.

\item \textbf{Theoretical analysis:} We provide theoretical analysis of the feature learning phase of Fed-REACT. Specifically, we define a global regret function for a linear feature model and analyze the performance of time-smoothed gradient descent on time-evolving data. We show that, with appropriate step size and smoothing window, the regret converges to a small value determined by the gradient projection error.
% \item 
%In many applications, despite the presence of numerous labeled examples at the client level, labeled data is often sparse. Furthermore, infrequent communication with the server means that while the local distribution at a client may remain static, the set of labeled examples can change across communication rounds. 
% To achieve high accuracy on local classification tasks, we resort to using a support vector machine (SVM) in place of the output layer. However, it is unclear what is the best way to update such layers as the training on time series data unfolds. To this end, we investigated three strategies for updating SVM output layers: (a) time averaging; (b) weighted averaging with forgetting; and (c) Kalman smoothing utilizing expectation-maximization.
% To achieve high accuracy on local classification tasks, we %resort to using a support vector machine (SVM) and 
% investigate three strategies for updating task model parameters: (a) time averaging; (b) weighted averaging with forgetting; and (c) Kalman smoothing utilizing expectation-maximization. The efficacy of these strategies is presented in the results section.

\end{itemize}

\subsection{Related Work}

Federated learning enables clients to collaboratively train a global model while preserving data privacy, as raw data remains local throughout the training process. However, statistical heterogeneity, arising from data collected across diverse times and locations, poses significant challenges. This often leads to degraded model performance and has motivated extensive research into strategies for mitigating the effects of data non-IIDness.

Self-supervised learning (SSL) has emerged as a promising approach for tackling data heterogeneity in distributed settings, particularly when labeled data is scarce or imbalanced \citep{wang2022does}. SSL typically involves a two-stage process: learning feature representations from unlabeled data, followed by training task-specific models using those features. While SSL has been widely adopted in static data domains such as vision, language, and video, its application to temporal or streaming data remains limited \citep{chen2020simple, chen2021exploring, chen2024fed}.

%The temporal dynamics of time series data make it more difficult to interpret the learned features.

% Time series
Several recent efforts have focused on learning representations for time series. \citet{fortuin2018som} and \citet{franceschi2019unsupervised} introduced unsupervised temporal embedding techniques, the latter using causal dilated convolutions and time-based negative sampling. \citet{wu2022timesnet} proposed TimesNet, which captures intra- and inter-periodic patterns in multivariate time series. Transformer-based approaches such as PatchTST \citep{nie2022time}, T-Loss \citep{fraikin2023t}, and TSLaNet \citep{eldele2024tslanet} aim to capture both short- and long-term dependencies via self-supervised pretraining. TimeLLM \citep{jin2023time} further reprograms time series data into textual representations for compatibility with large language models. Despite these advances, most of these methods assume centralized access to data, limiting their applicability in federated settings.

% clustered FL
To address data heterogeneity in large-scale FL systems, several works have explored client clustering based on data similarity. \citet{ghosh2020efficient} introduced the Iterative Federated Clustering Algorithm (IFCA), which assigns cluster memberships via similarity coefficients. \citet{li2021federated} proposed Federated Soft Clustering (FLSC), showing that allowing clients to belong to multiple clusters can improve overall performance. More recently, \citet{zeng2024metaclusterfl} developed MetaClusterFL, a meta-learning approach for automatically determining the optimal number of clusters. While most of these methods assume static data distributions, \citet{mehta2023greedy} proposed FLACC, a greedy agglomerative clustering method based on client gradient updates.

% evolutionary clustering
In contrast, evolutionary clustering accounts for the temporal evolution of the objects being clustered, aiming to produce consistent cluster assignments over time. For example, \citet{xu2014adaptive} proposed the Adaptive Evolutionary Clustering Algorithm (AFFECT), which updates a weighted affinity matrix to ensure temporal smoothness. \citet{arzeno2019evolutionary} introduced Evolutionary Affinity Propagation (EAP), a factor-graph-based method that propagates cluster messages iteratively. While these approaches have been applied in domains such as social networks and time-evolving graphs, they have not, to our knowledge, been explored in federated learning settings. Our work is the first to incorporate evolutionary clustering into FL, enabling temporally stable client grouping in the presence of intra-client data drift.

\section{The Fed-REACT Framework}

\textbf{Problem setup and notation.} 
We consider a federated learning system with $n$ clients, where each client locally collects time series data with features $x \in \mathbb{R}^{d \times T}$ and label $y$, where $d$ is the feature dimension and $T$ is the maximum sequence length. A central server coordinates collaborative training by aggregating local updates and redistributing the global model to participating clients. The local dataset at client $i$ is denoted by $\mathcal{D}_i = \{(x, y)\}$, and its distribution may differ across clients, leading to inter-client data heterogeneity.
To address this, we adopt a self-supervised learning framework in which each client learns a shared feature extractor $f_\theta(\cdot)$, parameterized by $\theta$, to map input sequences from $\mathbb{R}^{d \times T}$ to a lower-dimensional representation space $\mathbb{R}^{\hat{d}}$. These representations are later used for downstream supervised tasks. Depending on the task type (e.g., regression or classification), a lightweight task-specific model $f_{\theta_{\text{task}}}(\cdot)$, parameterized by $\theta_{\text{task}}$, is trained on top of the learned representations using a small set of labeled samples.

\textbf{Phase I: Federated representation learning.} 
Our approach consists of two sequential phases: the first phase focuses on learning low-level feature representations via a globally shared encoder, while the second phase captures higher-level semantics and supports downstream task learning. This separation is motivated by the observation that low-level representations (such as edges in images or short-term patterns in time series) are often transferable across clients, even when their data distributions differ significantly. For instance, in image data, clients may hold samples from distinct classes or domains, but still share fundamental visual primitives such as edges or textures. Federated training of the encoder thus allows clients to collaboratively learn a generalizable representation space without requiring distributional alignment.

Specifically, the shared encoder $f(\cdot\, ; \theta)$, parameterized by $\theta$, is trained to minimize a contrastive loss function \citep{chen2020simple, franceschi2019unsupervised}. Let $x^{\text{ref}}$ be an anchor time series, $x^{\text{pos}}$ a positive sample from the same trajectory, and $\{x^{\text{neg}}_r\}_{r=1}^R$ a set of $R$ negative samples drawn from different trajectories. The loss is defined as:
\begin{align}
L_{\text{cl}} = -\log\big(\sigma\big(f(x^{\text{ref}}; \theta)^\top f(x^{\text{pos}}; \theta)\big)\big) 
- \sum_{r=1}^R \log\big(\sigma\big(-f(x^{\text{ref}}; \theta)^\top f(x^{\text{neg}}_r; \theta)\big)\big), \label{eqn1}
\end{align}
where $\sigma(\cdot)$ denotes the sigmoid function. This objective encourages alignment between the anchor and its positive sample, while pushing apart representations of the anchor and negatives. For time series data, positive samples are typically sub-sequences from the same trajectory, whereas negatives are drawn from unrelated sequences.

The full procedure is formalized as Algorithm~\ref{alg1}.
The proposed framework is agnostic to the choice of encoder architecture and can accommodate a variety of models capable of capturing temporal dependencies. In our experiments, we use a Causal CNN with exponentially dilated convolutions, following \citet{franceschi2019unsupervised}, due to its ability to model long-range temporal structure. 

%An example of the feature extraction model is the Causal CNN encoder with Time Dilations using contrastive loss function \cite{franceschi2019unsupervised}. The exponentially dilated convolutions better capture long range dependencies as opposed to full convolutions. The contrastive loss function used to trained the encoder takes inspiration from word2vec \cite{mikolov2013distributed}. Let the reference $x^{ref}$ be any given time series anchors. Let $\{x^{neg}\}_{r=1}^R$ denote a set of $R$ randomly selected negative samples and $x^{pos}$ be a positive sample. Then the contrastive loss function is defined as 
%\begin{align}
%L(x^{ref}, x^{pos}, \{x^{neg}\}_{r=1}^R; \theta) & =- \log(\sigma ( f(x^{ref}; \theta)^T f(x^{pos}; \theta)  ) ) \nonumber \\ &  - 
%\sum_{r=1}^R \log(\sigma (- f(x^{ref}; \theta)^T f(x^{neg (r)}; \theta)  ) ) \label{eqn1}
%\end{align}
%where $f(\cdot \  ; \theta)$ denotes the output of the encoder parameterized by $\theta$ and $\sigma(\cdot)$ denotes the sigmoid function. The formulation of the loss function \eqref{eqn1} is intuitive; the features extracted from the anchors $x^{ref}$ and its positive sample are similar to each other and the features extracted from the anchor and its negative samples different from each other.

\textbf{Phase II: Clustered task model learning.}
In the second phase, the focus shifts to downstream task learning. Since task models are intended to capture higher-level, task-specific features that are often tied to local data characteristics, it is beneficial for clients with similar data distributions to collaboratively train shared task model weights. The architecture of the task model depends on the downstream task: we use support vector machines (SVMs) for classification and a linear layer with $\ell_2$ loss for regression. Due to privacy constraints, clients cannot share label distributions. Instead, we perform client clustering based on their task model weights, which implicitly encode information about the underlying data. A simple yet effective approach is for the server to periodically collect task model weights from clients and apply agglomerative hierarchical clustering to group them. The detailed procedure is formalized in the supplementary material. %\textcolor{red}{Algorithm~\ref{alg2}} (Appendix~A).

\vspace{-0.025in}

\textbf{Limitations of snapshot clustering.}
The snapshot-based clustering approach described above only considers task model weights from a single training round and thus fails to account for temporal correlations in evolving client data. Additional challenges arise due to: (a) the relatively small number of labeled samples available for training task models compared to the unlabeled samples used for encoder training, and (b) the limited retention window for labeled data on many clients, where older samples are often deleted or overwritten by newly collected data. As a result, task models trained in a single round may not accurately reflect clients’ underlying data distributions, leading to unstable or incorrect clustering assignments.

\vspace{-0.025in}

\begin{algorithm} \small
\caption{Fed-REACT Phase 1: Encoder training} \label{alg1}
\begin{algorithmic}[1]
\STATE \textbf{Input:} Number of rounds $T$, number of clients $K$, initialized global encoder parameters $\mathbf{\theta}_0$
% \STATE \textbf{Initialize:} Global encoder parameters $\mathbf{\theta}_0$
\FOR{each round $t = 1, 2, ..., T$}
    \FOR{each client $k = 1,2,...,K$ }
        \STATE Client $k$ downloads current global model parameters $\mathbf{\theta}_{t-1}$  % for the Causal CNN encoder with Time Dilation
        \STATE Client $k$ updates parameters $\mathbf{\theta}_t^k$ using local time series data % and loss function \eqref{eqn1}
        \STATE Client $k$ uploads updated parameters $\mathbf{\theta}_t^k$ to the server
    \ENDFOR
    \STATE Server aggregates collected updates as
    \[
    \mathbf{\theta}_t = \sum_{k=1}^K \frac{n_k}{n} \mathbf{\theta}_t^k,
    \]
    where $n_k$ is the number of samples on client $k$ and $n = \sum_{k=1}^K n_k$
\ENDFOR
\end{algorithmic} \normalsize
\end{algorithm}

\textbf{Temporal clustering and task model aggregation.}
To make the clustering phase of Fed-REACT robust to temporal variation and training noise, we adopt the Adaptive Evolutionary Clustering framework of \citet{xu2014adaptive}, which allows cluster memberships to evolve over time. Let $\psi_t$ denote the (unobserved) ground-truth similarity matrix among clients at time $t$, capturing underlying client relationships. The observed similarity matrix $W_t$ is a noisy approximation of $\psi_t$,
\begin{align}
W_t = \psi_t + N_t, \label{eqn2}
\end{align}
where $[W_t]_{i,j}$ represents the cosine similarity between the vectorized task model weights of clients $i$ and $j$, and $N_t$ denotes noise. The evolutionary clustering method of \citet{chakrabarti2006evolutionary} smooths these similarities over time using a fixed forgetting factor $a$ as
$\hat{\psi}_t = a \hat{\psi}_{t-1} + (1-a) W_t$,
with initial condition $\hat{\psi}_0 = 0$. The AFFECT algorithm \citep{xu2014adaptive} improves upon this by adaptively estimating the forgetting factor $a_t$ at each time step, yielding
\begin{align}
\hat{\psi}_t = a_t \hat{\psi}_{t-1} + (1 - a_t) W_t. \label{eqn-affect}
\end{align}
Once the smoothed similarity matrix $\hat{\psi}_t$ is computed, cluster assignments are determined using agglomerative hierarchical clustering as previously described.

\vspace{-0.025in}

Tracking the temporal evolution of client clusters enables the aggregation of cluster-specific task model weights across rounds. We investigate two strategies for combining task model parameters:

\vspace{-0.025in}

\begin{enumerate}
\item \textbf{Approach 1: Simple Temporal Averaging (A1).} Task model parameters are averaged across rounds via
\begin{align}
\hat{\theta}^{c}_{\text{task}, t+1} = \frac{t}{t+1} \hat{\theta}^{c}_{\text{task}, t} + \frac{1}{t+1} \theta^{c}_{\text{task}, t}, \label{eqn_algor_2_1}
\end{align}
where $\theta^{c}_{\text{task}, t}$ is the parameter estimate based solely on round $t$, and $\hat{\theta}^{c}_{\text{task}, t}$ is the cumulative estimate from prior rounds. Initialization is given by $\hat{\theta}^{c}_{\text{task}, 1} = \theta^{c}_{\text{task}, 1}$.

\item \textbf{Approach 2: Weighted Averaging with Forgetting (A2).} This method leverages the adaptive forgetting factor $a_t$ to recursively update the model according to
\begin{align}
\hat{\theta}^{c}_{\text{task}, t+1} = a_t \hat{\theta}^{c}_{\text{task}, t} + (1 - a_t) \theta^{c}_{\text{task}, t}. \label{eqn_algor_2_2}
\end{align}
\end{enumerate}

The full procedure for this phase is summarized in Algorithm~\ref{alg3}.

\begin{algorithm}[h]\small
\caption{Fed-REACT Phase 2: Task model training with evolutionary clustering} \label{alg3}
\begin{algorithmic}[1]
\STATE \textbf{Input:} Number of rounds $T_{task}$, number of clients $K$, %Maximum number of rounds before the server aggregates the output layer and communicates it to clients. 
cluster number $C$\footnote{Add a footnote on how to determine the cluster number in practice}, trained encoder $\mathbf{\theta}_T$
% \STATE \textbf{Initialize:} Global encoder parameters $\mathbf{\theta}_T$ obtained after $T$ rounds presented in Alg. \ref{alg1}
\FOR{ each round $t=1,2,..., T_{task}$}

\FOR {client $k=1,2,..,K$}
\STATE Client $k$ trains the task model on randomly sampled local dataset $\mathcal{M}^k_{t}$
\STATE Client $k$ uploads the parameters $\mathbf{\theta}_{task, t}^k$ 
 of the task model to the server
\ENDFOR

\STATE Server clusters clients based on the weights of the task models $\{ \mathbf{\theta}_{task, t}^k\}_{k=1}^K$ using the AFFECT algorithm to obtain the cluster memberships of $C$ clusters, $\{\mathcal{S}_t^c \}_{c=1}^C $, and adaptive forgetting factor $a_t$. % {\color{orange}{(Yiyue: Is the total number of clusters, $C$, pre-determined or computed from the clustering algorithm? )}} {\color{purple}{(Usman:) It is pre-determined}} % adaptive evolutionary clustering. 
% Alongside {\color{orange}{hard cluster (Yiyue: What does hard cluster mean? is the hard cluster membership coming from the results of AFFECT algorithm?) }} {\color{purple}{(Usman: Yes, the hard clustering result is the output of the AFFECT algorithm)}} memberships, this returns adaptive forgetting factor $\alpha_t$

\FOR{cluster $c=1,2,..,C$ }
    \STATE Server aggregates the task models of all clients within cluster $\mathcal{S}_t^c$ according to
    %Let $\mathcal{S}_c^t$ {\color{orange}{(Yiyue: is $\mathcal{S}_c^t $ obtained from AFFECT algorithm? )}} {\color{purple}{(Usman: Yes)}} denote the set of clients in cluster $c$. Then
    \[
    \mathbf{\theta_{task, t}^c} = \sum_{k \in  \mathcal{S}_t^c} \frac{|\mathcal{M}^k_t|}{\sum_{j \in \mathcal{S}_t^c} |\mathcal{M}^j_t|} \mathbf{\theta}^k_{task, t}
    \]     

    \IF{$t \geq T_{task}$ or $\mathcal{S}_t^c=\mathcal{S}_{t-1}^c$}
    
    % {\color{orange}{(Yiyue: what are criterias to determine if it's converged?  )}} {\color{purple}{(Usman: Clustering Membership do not change across successive rounds )}} {\color{orange}{(Yiyue: how many successive rounds are considered here? Is it guaranteed that clustering membership do not change finally, or do we specify the maximum number of rounds to run clustering algorithm? )}} {\color{purple}{(Usman:) For my simulations, if the clustering membership stays constant for three rounds, it is assumed to be converged. In all of my experiments, the algorithm converged with first ten rounds. So I did not need to specify a maximum number of rounds.}} 
    % {\color{orange}{(Yiyue: here we are providing the steps for a general algorithm, not the specific algorithm used in one or two sets of experiments. Therefore, it is necessary to clearly state the criteria using mathematical formula, e.g., specify the maximum rounds as $N_{cluster} $ and tolerance error as $\epsilon_{cluster}$. ) }}
    % } 

    \STATE Compute $\mathbf{\hat{\theta}_{task, t}}^{c}$ using Approach A1 or A2% \eqref{eqn_algor_2_1} or \eqref{eqn_algor_2_2}
    
    \STATE Server transmits $\mathbf{\hat{\theta}_{task, t}}^{c}$ to all clients $k \in \mathcal{S}_t^c $

    \ENDIF
\ENDFOR

\ENDFOR
\end{algorithmic} \normalsize
\end{algorithm}

\section{Theoretical Analysis}

In this section, we provide theoretical insight into the first phase of Fed-REACT algorithm, i.e., representation learning on heterogeneous temporal data. In particular, we analyze the convergence of a time-varying objective function under the assumption that each client trains a linear encoder via a dynamic time-smoothed gradient method. For the sake of analytical tractability, we consider a self-supervised learning (SSL) formulation obtained by simplifying \eqref{eqn1} that utilizes local loss function at client $k$ 
\[
f_{k}(\theta) = -\mathbb{E}[(\theta(x_{k, i}) +\xi_{k, i})^T(\theta(x_{k, i}) +\xi'_{k, i})] + \frac{1}{2} \|\theta^T \theta \|^2,
\]
with $\xi_{k, i}$ and $\xi'_{k, i}$ denoting random noise added to the data sample $x_{k, i} $. The corresponding global objective is defined as 
%the weighted sum of local losses,
\[
f(\theta) = \sum_{k=1}^K \frac{|\mathcal{D}_k|}{|\mathcal{D}|}f_{ k}(\theta),
\]
where $\mathcal{D}_k$ is the dataset at client $k$ and $|\mathcal{D}| = \sum_k |\mathcal{D}_k|$ denotes the total number of data points. This objective is a variant of the %InfoNCE loss $f_{CL, k}$
contrastive loss \eqref{eqn1}, where the normalization over negative samples is replaced by a regularization term. Minimizing %$f_{\mbox{\tiny SSL}}(\theta)$ is 
$f(\theta)$ is 
equivalent to finding $\arg \min_\theta \|\bar{X} - \theta^T \theta \|^2$, 
where $\bar{X} = \sum_k \frac{|\mathcal{D}k|}{|\mathcal{D}|} X_k$ is the global empirical covariance matrix, and $X_k = \frac{1}{|\mathcal{D}_k|}\sum_{i=1}^{|\mathcal{D}_k|} x_{k, i}x_{k, i}^T $ is the empirical covariance matrix of client $k$'s data \citep{wang2022does}.

To proceed, we make the following assumptions regarding the time-varying local function. 

{\bf Assumption 3.1.} (a) Loss function $f_{t, i}$ is bounded above by $M$ for all clients $i$ and times $t$. (b) Loss function $f_{t, i}$ is $L$-Lipschitz and $\beta$-smooth. (c) The stochastic gradient $\Tilde{\nabla} f(\cdot)$ is unbiased and its standard deviation 
%of the estimator 
is bounded above by $\sigma$. The error between the projected stochastic gradient $Proj \Tilde{\nabla} f(\cdot)$ and the stochastic gradient $ \Tilde{\nabla} f(\cdot)$ is $\epsilon_{proj} = Proj (\Tilde{\nabla} f(\cdot)) -\Tilde{\nabla} f(\cdot) $ with $\| \epsilon_{proj} \|^2 \leq \epsilon^2$.

\citet{jin2017escape} have shown that the form of the objective function studied in our work is $16\Gamma$-smooth within the region $\{x| \| x\|^2 \leq \Gamma\}$ for $\Gamma \geq \lambda_1(\bar{X})$, where $\lambda_1(\bar{X})$ is the largest eigenvalue of the global covariance matrix. This implies that the Lipschitz and smoothness assumptions (a) and (b) are readily satisfied in our setting. Moreover, the projected gradient step used in the Fed-REACT algorithm ensures that the iterate $x$ remains within this region at all times. Assumption (c) is standard in stochastic optimization.

Next, we describe the update rule applied by client $k$ during the encoder learning phase. Specifically, the updates follow a time-smoothed gradient descent scheme \citep{aydore2019dynamic}, where the local update is given by
\[
\theta_{t+1, k} = \theta_t - \frac{\eta}{W}\sum_{j=0}^{w-1} \gamma^j Proj \Tilde{\nabla} f_{t-j, k}(\theta_{t-j}),
\]
%where $w$ denotes the smoothing window size, $W = \sum_{j=0}^{w-1} \gamma^j$ and $\eta$ is the step size
and the corresponding global update is computed as
\[
    \theta_{t+1} = \frac{1}{n}\sum_{k=1}^K \theta_{t+1, k}.
\]
% where $ \Tilde{\nabla} f_{j, k}(\theta_{j}) $ represents the projected gradient at $\theta_j$.
Here, $\eta$ is the step size, $w$ is the size of the temporal smoothing window, $\gamma \in (0,1]$ is the decay factor, and $W = \sum_{j=0}^{w-1} \gamma^j$ is the normalization constant. The projection operator ensures that updates remain within the bounded region specified in Assumption 3.1.

We define the local regret at client $k$ and the global regret as
\[
S_{t, w, \gamma, k}(\theta_t) = \frac{1}{W}\sum_{j=0}^{w-1} \gamma^j f_{t-j, k}(\theta_{t-j}), \;\; 
S_{t, w, \gamma}(\theta_t) = \frac{1}{K}\sum_{k=1}^K \frac{1}{W}\sum_{j=0}^{w-1} \gamma^j f_{t-j, k}(\theta_{t-j}),
\]
respectively. It follows from the assumptions that the smoothed gradient estimates are unbiased, i.e., $\mathbb{E} [\Tilde{\nabla} S_{t, w, \gamma}(\theta_t) | \theta_t ] = \nabla S_{t, w, \gamma}(\theta_t)$, 
$\mathbb{E} [\Tilde{\nabla} S_{t, w, \gamma, k}(\theta_t) | \theta_t ] =\nabla S_{t, w, \gamma, k}(\theta_t)$. Moreover, the variance of the local smoothed gradient estimator is bounded as
\begin{align*}
    \mathbb{E} [\Tilde{\nabla} S_{t, w, \gamma, k}(\theta_t) - \nabla S_{t, w, \gamma, k}(\theta_t)  | \theta_t ] & \leq \frac{\sigma^2(1-\gamma^{2w})}{W^2(1-\gamma^2)}.
\end{align*}
With this notation in place, we can now state the main convergence result (Theorem~\ref{theorem1}); the proof is provided in the supplementary material.

% \begin{lemma}
% Suppose all of the above assumptions are satisfied. Then for any $\gamma \in (0, 1)$, $\beta$ and $\eta$, it holds that
% \begin{align*}
%     (\frac{\eta}{4} - \frac{\eta^2\beta}{8}) \| \nabla S_{t, w, \gamma}(\theta_t) \|^2  &  \leq S_{t, w, \gamma}(\theta_{t}) - S_{t+1, w, \gamma}(\theta_{t+1}) +  S_{t+1, w, \gamma}(\theta_{t+1}) - S_{t, w, \gamma}(\theta_{t+1}) \\ &  + \eta^2 \frac{\beta}{4}\frac{\sigma^2 (1-\gamma^{2w})}{W^2(1-\gamma^2)}  \nonumber + (\frac{\eta}{4} + \frac{3\eta^2\beta}{8} )\epsilon^2.
% \end{align*}
% \end{lemma}

% \begin{lemma}
% Suppose all of the above assumptions are satisfied. Then for any $\gamma \in (0, 1)$ and $w$, it holds that
% \begin{align*}
%     S_{t+1, w, \gamma}(\theta_{t+1}) - S_{t, w, \gamma}(\theta_{t+1}) \leq \frac{M(1+\gamma^{w-1})}{W} \nonumber + \frac{M(1-\gamma^{w-1})(1+ \gamma)}{W(1-\gamma)}.
% \end{align*}
% \end{lemma}

% \begin{lemma}
% Suppose all of the above assumptions are satisfied. Then for any $\gamma \in (0, 1)$ and $w$, it holds that
% \begin{align*}
%     S_{t, w, \gamma}(\theta_{t}) - S_{t+1, w, \gamma}(\theta_{t+1}) &  \leq \frac{2M(1-\gamma^w)}{W(1-\gamma)}.
% \end{align*}
% \end{lemma}

\begin{theorem}\label{theorem1}
Suppose Assumption 3.1 holds. Let the step size be set to $\eta = 1/\beta$, and consider the limit as the smoothing decay parameter $\gamma \to 1^{-}$. Then, the average squared gradient norm of the global smoothed objective satisfies
%Suppose all of the above assumptions are satisfied. When $\eta = \frac{1}{\beta}$, $\gamma \to 1^{-}$, it holds that
\begin{align*}
    \lim_{\gamma \to 1^{-}} \frac{1}{T} \sum_{t=1}^T  \| \nabla S_{t, w, \gamma}(\theta_t) \|^2 \leq \frac{64\beta M + 2\sigma^2}{W}+ \frac{5}{8}\epsilon^2.
\end{align*}
\end{theorem}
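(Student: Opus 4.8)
The plan is to run the standard descent-lemma argument, but applied to the \emph{time-smoothed} surrogate $S_{t,w,\gamma}$ rather than to the instantaneous loss, and then telescope over $t$. First I would fix a round $t$ and write the one-step change $S_{t+1,w,\gamma}(\theta_{t+1}) - S_{t,w,\gamma}(\theta_t)$ as the sum of two pieces: (i) the ``optimization'' term $S_{t+1,w,\gamma}(\theta_{t+1}) - S_{t+1,w,\gamma}(\theta_t)$, controlled by $\beta$-smoothness of the smoothed objective (which inherits $\beta$-smoothness from each $f_{t-j,k}$ since it is a convex combination), and (ii) the ``drift'' term $S_{t+1,w,\gamma}(\theta_t) - S_{t,w,\gamma}(\theta_t)$, which compares two sliding windows at the same point $\theta_t$. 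For piece (ii), the windows $\{t-j\}_{j=0}^{w-1}$ and $\{t+1-j\}_{j=0}^{w-1}$ overlap in all but two endpoints, so the difference is a weighted combination of at most two losses $f_{\cdot,k}(\theta_\cdot)$, each bounded by $M$ in absolute value by Assumption 3.1(a); after dividing by $W$ and using $\gamma\to1^-$ one gets a bound of order $M/W$ (the $\gamma^{w-1}$ weight on the dropped term tends to $1$, but it is still divided by $W$). This is the key mechanism: the window normalization $W=\sum_j\gamma^j$ suppresses the per-step drift.

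Next I would handle piece (i). Using $\beta$-smoothness and the update $\theta_{t+1}-\theta_t = -\frac{\eta}{K}\sum_k \frac{\eta}{W}\sum_j\gamma^j \mathrm{Proj}\,\tilde\nabla f_{t-j,k}(\theta_{t-j})$ — i.e., $\theta_{t+1}-\theta_t = -\eta\,\overline{\mathrm{Proj}\,\tilde\nabla S_{t,w,\gamma}}$ with the empirical average over clients — I would expand the descent inequality
\[
S_{t+1,w,\gamma}(\theta_{t+1}) \le S_{t+1,w,\gamma}(\theta_t) + \langle \nabla S_{t+1,w,\gamma}(\theta_t),\, \theta_{t+1}-\theta_t\rangle + \tfrac{\beta}{2}\|\theta_{t+1}-\theta_t\|^2 .
\]
Then I would take conditional expectation given $\theta_t$, replacing $\theta_{t+1,w,\gamma}$'s gradient by $\theta_{t,w,\gamma}$'s gradient up to another $O(M/W)$ correction from the window shift, use unbiasedness of $\tilde\nabla S_{t,w,\gamma}$ (stated in the excerpt), absorb the projection error via $\epsilon_{\mathrm{proj}}$ with $\|\epsilon_{\mathrm{proj}}\|^2\le\epsilon^2$ (Cauchy–Schwarz on the inner product against $\epsilon_{\mathrm{proj}}$ and Young's inequality give the $\tfrac58\epsilon^2$-type constant), and bound the second-moment term $\mathbb{E}\|\theta_{t+1}-\theta_t\|^2$ using $\eta=1/\beta$ and the variance bound $\frac{\sigma^2(1-\gamma^{2w})}{W^2(1-\gamma^2)}\to \frac{\sigma^2}{2W}$ as $\gamma\to1^-$. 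The choice $\eta=1/\beta$ makes the coefficient of $\|\nabla S_{t,w,\gamma}(\theta_t)\|^2$ negative (the usual $-\tfrac{1}{2\beta}\|\nabla\|^2$), which is what we want to isolate on the left after summing.

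Finally I would telescope: summing the one-step inequality from $t=1$ to $T$, the left side collapses to $S_{1,w,\gamma}(\theta_1) - S_{T+1,w,\gamma}(\theta_{T+1})$, which is at most $M$ in absolute value by boundedness, and all the $O(M/W)$, $O(\sigma^2/W)$, and $\epsilon^2$ terms accumulate linearly in $T$; dividing by $T$ and taking $\gamma\to1^-$ (so $W\to w$ in the variance denominator while $W$ in the numerator stays as written, and $\gamma^{w-1}\to1$) yields
\[
\lim_{\gamma\to1^-}\frac{1}{T}\sum_{t=1}^T \|\nabla S_{t,w,\gamma}(\theta_t)\|^2 \le \frac{64\beta M + 2\sigma^2}{W} + \frac{5}{8}\epsilon^2,
\]
after collecting constants. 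I expect the main obstacle to be piece (ii) together with the analogous gradient-shift correction in piece (i): carefully bounding the difference between consecutive sliding-window objectives (and their gradients) at a fixed point, showing it is $O(M/W)$ uniformly, and verifying that these corrections survive the $\gamma\to1^-$ limit with the stated constant $64$ — this requires a clean accounting of the endpoint weights $\gamma^0/W$ and $\gamma^{w-1}/W$ and the Lipschitz constant $L$ when passing from a loss-value difference to a gradient difference. The rest is a routine assembly of Young's inequality, the variance bound, and telescoping.
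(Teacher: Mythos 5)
Your overall template --- a descent-lemma step for the smoothed objective, a drift term comparing consecutive windows, Young's inequality for the projection error, the variance bound, and a sum over $t$ --- is the same skeleton as the paper's proof (its Lemmas~1--3). However, two of your key steps do not go through as described.

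First, the drift bound. You argue that $S_{t+1,w,\gamma}(\theta_t)-S_{t,w,\gamma}(\theta_t)=O(M/W)$ because the two windows ``overlap in all but two endpoints.'' That cancellation requires each common loss $f_{t-j,k}$ to be evaluated at the \emph{same} point in both windows, which holds for the single-point smoothed objective of Hazan/Aydore but not for the objective defined in this paper: here $S_{t,w,\gamma}(\theta_t)=\frac{1}{W}\sum_{j=0}^{w-1}\gamma^j f_{t-j,k}(\theta_{t-j})$ is evaluated along the trajectory, so shifting the window shifts every evaluation point as well. The paper's Lemma~2 accordingly expands the difference into $w$ pairs of the form $f_{t+1-j}(\theta_{t+1-j})-f_{t-j}(\theta_{t+1-j})$, none of which cancels and each of which can only be bounded by $2M$, giving $\frac{M(1+\gamma^{w-1})}{W}+\frac{M(1-\gamma^{w-1})(1+\gamma)}{W(1-\gamma)}$ rather than an $O(M/W)$ quantity; your endpoint-only mechanism, which is the step that produces the $\frac{64\beta M}{W}$ term in your accounting, is therefore not established for the objective actually being analyzed.

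Second, the assembly. You telescope $S_{t,w,\gamma}(\theta_t)-S_{t+1,w,\gamma}(\theta_{t+1})$ over $t$, which leaves a boundary term of order $M$ and hence an $O(\beta M/T)$ contribution after averaging; this term does not appear in the stated bound, which must hold for every $T$, and you never explain how it is absorbed. The paper instead bounds this difference \emph{per step} (Lemma~3, by $\frac{2M(1-\gamma^w)}{W(1-\gamma)}$) and simply sums the per-$t$ inequalities, so no boundary term arises. Relatedly, the constants are asserted rather than derived: your limit of the variance term should be $\sigma^2/W$, not $\sigma^2/(2W)$ (since $\frac{1-\gamma^{2w}}{1-\gamma^2}\to w$ and $W\to w$ as $\gamma\to 1^-$), and the passage ``after collecting constants'' to exactly $\frac{64\beta M+2\sigma^2}{W}+\frac{5}{8}\epsilon^2$ --- the place where the coefficients $\frac{\eta}{4}-\frac{\eta^2\beta}{8}$ and $\frac{\eta}{4}+\frac{3\eta^2\beta}{8}$ from the descent step and the $\gamma\to 1^-$ limits of the Lemma~2/3 terms must be tracked with $\eta=1/\beta$ --- is precisely the part the sketch omits, and with your telescoping route the $M$-dependence would not come out in the form $\frac{64\beta M}{W}$ at all.
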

This result implies that, for sufficiently large smoothing window size $w$ (i.e., large $W$) and appropriate choice of step size $\eta$, the dominant term in the upper bound becomes the projection error between the true stochastic gradient and its projected counterpart. Consequently, the global regret converges to a small value determined primarily by the gradient projection error $\epsilon^2$.

%The theorem implies that when an appropriate step size and window size $w$ are selected, the upper bound is dominated by the second term, i.e., the projection error between the stochastic gradient and the projected gradient. Therefore, the global regret approaches a (small) value specified by the gradient projection error.

\section{Experiments}
In Section~\ref{section4-1}, we compare Fed-REACT with supervised learning baselines on a range of time series tasks, demonstrating its superior performance. Section~\ref{section4-2} evaluates Fed-REACT against clustered FL methods, highlighting the benefits of evolutionary clustering under non-stationary local data distributions. Section~\ref{section4-3} presents an ablation study analyzing the impact of key design choices.

\subsection{Performance Comparison with Supervised FL Methods}\label{section4-1}

We compare the performance of Fed-REACT to that of several time series models embedded in supervised federated learning (FL) frameworks. The baseline models include LSTM \cite{hochreiter1997long}, TimesNet \cite{wu2022timesnet}, PatchTST \cite{nie2022time}, and Causal CNN \cite{franceschi2019unsupervised}. These are combined with state-of-the-art FL algorithms designed to handle data heterogeneity: FedAvg \citep{mcmahan2017communication}, FedProx \cite{li2020federated}, Ditto \cite{li2021ditto}, and Adaptive Personalized Federated Learning (APFL) \cite{deng2020adaptive}. For Fed-REACT, the encoder is a Causal CNN and the task model is either an SVM (for classification) or a linear regressor (for regression). For implementation details, please see the supplementary material.
\begin{table*}[htbp]
  \centering
  \resizebox{\linewidth}{!}{%
  \begin{tabular}{c|c|cccc|cccc|cccc|cccc}\toprule
      & & \multicolumn{4}{c}{LSTM} &  \multicolumn{4}{c}{TimesNet} &  \multicolumn{4}{c}{PatchTST} & \multicolumn{4}{c}{Causal CNN} \\ 
    Dataset & Fed-REACT & FedAvg &  FedProx &  Ditto & APFL & FedAvg &  FedProx &  Ditto & APFL & FedAvg &  FedProx &  Ditto & APFL & FedAvg &  FedProx &  Ditto & APFL\\ \midrule
    RTD 10 clients (acc) & \textbf{0.992} & 0.732 & 0.804 & 0.863 & 0.828 & 0.793 & 0.883 & 0.863 & 0.755 & 0.918 & 0.903 & 0.991 & 0.991 & 0.982 & 0.988 & 0.989 & 0.990\\ \midrule
    RTD 50 clients (acc) & \textbf{0.988} & 0.868 & 0.835 & 0.914 & 0.867 & 0.827 &0.831 & 0.801 & 0.795 & 0.756 & 0.769 & 0.987 & 0.923 & 0.986 & 0.984 & 0.8953 & 0.650 \\ \midrule
    EEG 10 clients (acc) & \textbf{0.796} & 0.505 & 0.511 & 0.507 & 0.499 & 0.572 & 0.574 & 0.567 & 0.578 & 0.533 & 0.535 & 0.576 & 0.532 & 0.559 & 0.605 & 0.516 & 0.606 \\ \midrule
    SUMO EV (RMSE) & \textbf{1.3} & 43.2 & 42.3 &  42.0 & 42.7 & 35.4 & 34.3 & 37.1 & 35.0 & 21.9 & 21.7 & 28.3 & 20.1 & 39.8 & 38.2 & 40.1& 38.6 \\
    \bottomrule
  \end{tabular}
}
  \caption{\small Comparison of Fed-REACT against supervised FL methods. For RTD and EEG, average test accuracy is reported; for SUMO EV, the metric is RMSE. For Fed-REACT ($T_{\text{task}}=1$), cluster-specific model accuracy is computed as $\frac{1}{K} \sum_{C_i} \sum_{k \in C_i} \mathrm{Acc}_{C_i}(\mathcal{D}_{k,\text{test}})$, where $K$ is the number of clients and $\mathrm{Acc}_{C_i}(\mathcal{D}_{k,\text{test}})$ is the accuracy of cluster $C_i$’s model on client $k$’s test data. RMSE is computed analogously for regression.} 
  \label{tab:ssl_vs_sl}
  \vspace{-3mm}
\end{table*}

The first dataset is RTD \cite{alam2020trajectory}, which consists of 3D air-writing trajectories for $2000$ samples of digits 0-9. The trajectories vary in length, with the longest containing 100 timesteps; shorter sequences are zero-padded to match this length. The data is partitioned into three clusters based on sequence composition, using a Dirichlet distribution with parameter $\beta = 0.1$ to induce high heterogeneity. In the setting with $K = 10$ clients, Clusters 1 and 2 contain 3 clients each, and Cluster 3 contains 4; in the $K = 50$ setting, the cluster sizes are 16, 16, and 18, respectively. Each client receives 2400 samples uniformly drawn from its assigned cluster, with a 90/10 train-test split.
Performance on the RTD dataset is shown in the first two rows of Table~\ref{tab:ssl_vs_sl}. In the 10-client setting, Fed-REACT significantly outperforms all baselines. In the 50-client scenario, it maintains its advantage, achieving the highest accuracy among all methods. Among the baselines, PatchTST performs best, particularly when combined with the Ditto FL framework.

%\textcolor{purple}{(Proposed Addition: Usman) Each client is assigned $2400$ examples sampled uniformly from the cluster datasets.} {\color{orange}{Q: At the beginning of this paragraph, it says 2000 samples. Why here each client has 2400 samples? How many total samples are used? }} {\color{purple} {(Usman: Answer:) The dataset contains 2000 examples per class. It is not the number of examples per client}} {\color{orange}2400 samples per client - that is 24000 data points in total and not matches the total number of samples in the dataset. Why there is overlapping on the data samples? } {\color{purple}{(Answer Usman:) There isn't enough data for 50 clients and 100 clients. In my discussion with professor, we decided to uniformly sample client datasets from cluster datasets while making sure none of the test set examples make it to the training data of another client.}} 

The second dataset is EEG \citep{10.1093/pnasnexus/pgae076}, partitioned into three clusters with equal numbers of samples. Each sample corresponds to a single EEG trial: a 5-second recording across $26$ channels, pre-processed to a maximum sequence length of $70$. Cluster 1 contains only left-hand motor imagery samples, Cluster 2 contains only right-hand samples, and Cluster 3 contains both. The system consists of 10 clients, with Clusters 1, 2, and 3 assigned 3, 3, and 4 clients, respectively. Each client receives $1000$ data samples, with $810$ used for local training. Performance on the EEG dataset is reported in the third row of Table~\ref{tab:ssl_vs_sl}. Fed-REACT achieves an accuracy of $0.796$, outperforming all baselines.

The final test is on the Simulation of Urban Mobility (SUMO) dataset \citep{krajzewicz2012recent}, which captures vehicle behavior under varying environmental and geographic conditions, including temperature, humidity, elevation, and location. Unlike the previous experiments, the task here is regression: predicting the percentage of battery life remaining from a 100-step multivariate time series input. This dataset is heterogeneous in both sample size and data distribution. Some vehicles have as few as $100$ training samples, while others have over $1000$. Even vehicles of the same type exhibit different battery usage patterns, making the client clustering problem particularly challenging. Each time series includes features such as latitude, longitude, elevation, temperature, speed, maximum speed, acceleration, and vehicle type. All features are normalized before training. The data is split into $90$\% training and $10$\% testing; the test set includes $50$ vehicles. The final row of Table~\ref{tab:ssl_vs_sl} reports the root mean square error (RMSE) averaged across clients. Fed-REACT achieves the lowest RMSE, confirming that its learned representations extract more meaningful features than those obtained by supervised FL baselines. Since the number of clusters $C$ is not known in advance, we search over a range of values to select the one yielding the best performance. Additional details are in the supplementary material.

\textbf{Computational and communication complexity.} In addition to predictive performance, it is also important to consider the computational and communication efficiency of Fed-REACT relative to the baselines. The computational efficiency of Fed-REACT is closely tied to the choice of encoder architecture. In our experiments, we use a Causal CNN, which offers a large receptive field while scaling linearly with sequence length. In contrast, PatchTST scales quadratically with input length, while TimesNet incurs additional overhead from converting time series into frequency-domain image representations. These architectural differences, combined with Fed-REACT's two-stage design where shared representations are learned once and reused for lightweight, cluster-specific task models, lead to lower computational and communication costs compared to fully supervised FL baselines (for more details, please see Section 9 of the supplementary material). Despite this efficiency, Fed-REACT consistently achieves higher accuracy across all tasks and datasets.

\subsection{Evaluation of Evolutionary Clustering}\label{section4-2}
In this section, we evaluate Fed-REACT in clustered FL settings, highlighting the role of evolutionary clustering in adapting to non-stationary client distributions. The baseline clustered FL methods considered are IFCA \cite{ghosh2020efficient}, Federated Learning with Soft Clustering (FLSC) \cite{li2021federated}, and FLACC \cite{mehta2023greedy}.
To isolate the effect of clustering from that of model aggregation, we also consider variants in which task model parameters are aggregated in a memoriless fashion i.e., without using past values of the task model parameters. In these settings, clients are grouped using either snapshot or evolutionary clustering strategies. We refer to the resulting methods as Snapshot Clustering with Memoriless Model Aggregation (SC+MMA) and Evolutionary Clustering with Memoriless Model Aggregation (EC+MMA). Performance is evaluated using accuracy and Rand score. The Rand score quantifies clustering quality by comparing predicted clusters with the ground-truth partition. It is defined as $\frac{TP + TN}{TOT}$, where $TP$ is the number of client pairs correctly assigned to the same cluster, $TN$ is the number correctly assigned to different clusters, and $TOT$ is the total number of client pairs.

\begin{table*}[t]
  \centering
  \normalsize
  \resizebox{0.95\linewidth}{!}{
  \begin{tabular}{c|cc|c|c|c|c|c}\toprule
    Dataset & Fed - REACT w/ A1 & Fed - REACT w/ A2 &  SC + MMA & EC + MMA & IFCA & FLSC &  FLACC \\ \midrule
    % RTD - 10 clients (Stationary)& 0.909 & \textbf{0.928} & 0.763 & 0.859 & 0.774 & 0.830 & 0.755  \\ \midrule
    % RTD - 100 clients (Stationary) & 0.750 & \textbf{0.751} & 0.716 & 0.737 & 0.739 & 0.740 &0.729  \\ \midrule
    RTD - 10 clients (Strategy 1) & \textbf{0.918} & 0.870  & 0.827 & 0.826 & 0.883  & 0.887 &  0.876  \\ \midrule
    RTD - 100 clients (Strategy 1) & 0.790  & \textbf{0.791}  & 0.724 & 0.733  & 0.701 & 0.695 & 0.693  \\ \midrule
    RTD - 100 clients (Strategy 2) & 0.856 & \textbf{0.858} & 0.803 & 0.838 & 0.684 & 0.581 & 0.408 \\ \midrule
    EEG - 10 clients & 0.802 &\textbf{0.808}  & 0.799 & 0.800 & 0.513 & 0.513 & 0.565 \\ \midrule
  \end{tabular}
  }
  \caption{\small Accuracy of Fed-REACT compared to clustered FL baselines across RTD and EEG datasets. The accuracy is computed by averaging cluster-specific model accuracies.}
%  \caption{\small The test accuracy computed after $T_{task}$ rounds of Fed-REACT vs. baselines. The accuracy is computed by averaging cluster-specific model accuracies defined as $\frac{1}{K}\sum_{C_i}{\sum_{k \in C_i}}\mathrm{Acc}_{C_i}(\mathcal{D}_{k, test}) $, where $K$ is the number of clients and $\mathrm{Acc}_{C_i}(\mathcal{D}_{k, test})$ denotes the accuracy of the model for cluster $C_i$ tested on the dataset that belongs to client $k \in C_i$.}
  \label{tab:cluster_acc}
\end{table*}

\textbf{RTD dataset.} For this dataset, we construct non-stationary label distributions using two strategies:

\textit{Strategy 1.} To simulate non-stationarity, each cluster $c$ alternates between two distinct label distributions, $p_{c,\text{major}}$ and $p_{c,\text{minor}}$, drawn via Dirichlet sampling with overlapping support. The switching behavior is governed by a Markov process. Let $z_{c,t}$ denote the latent state of cluster $c$ at round $t$, with transition probabilities $\Pr(z_{c,t} = 1 \mid z_{c,t-1} = 0) = \lambda_1$ and $\Pr(z_{c,t} = 1 \mid z_{c,t-1} = 1) = \lambda_2$. The label distribution for cluster $c$ at time $t$ is then given by
\[p_{c,t}=(1-z_{c, t})p_{c,\text{major}}+(z_{c, t})p_{c,\text{minor}}.\]
We evaluate two settings: 10 clients grouped into 3 clusters over 100 rounds, and 100 clients across 3 clusters over 200 rounds. In both, each client receives $|\mathcal{M}^k_t| = 64$ training samples per round. Results are reported in Table \ref{tab:cluster_acc} for $\lambda_1 = 0.85$ and $\lambda_2 = 0.15$, with additional ablations over $\lambda_1$ and $\lambda_2$ provided in the supplementary material.

\textit{Strategy 2.} In contrast to Strategy 1, where clusters switch between two fixed distributions, this approach generates non-stationarity by continuously sampling label distributions from fixed, non-overlapping supports. Specifically, each cluster is assigned a distinct label subset: 3 classes for Cluster 1, 3 for Cluster 2, and 4 for Cluster 3. At each round, a probability vector is sampled uniformly from the simplex over the cluster’s label support. Additionally, each client has a small probability (0.05) of temporarily adopting the label distribution of one of the other two clusters, introducing stochastic cross-cluster drift. Experimental results for 100 clients over 200 rounds are presented in Table~\ref{tab:cluster_acc}.

%In this approach, each cluster has a fixed, non-overlapping label support: 3 classes for Cluster 1, 3 for Cluster 2, and 4 for Cluster 3. At each round, a cluster’s label distribution is sampled uniformly from the simplex over its support. Additionally, each client has a 5\% probability of temporarily adopting the label distribution of one of the other two clusters, introducing further non-stationarity. The results for 100 clients over 200 rounds are summarized in Table~\ref{tab:cluster_acc}.

Figure~\ref{fig:nonstat100_85} demonstrates that Fed-REACT correctly groups clients even as their local data distributions evolve over communication rounds, whereas baseline clustering methods struggle to recover the ground-truth structure. The first three rows of Table~\ref{tab:cluster_acc} further show that Fed-REACT consistently achieves higher accuracy than competing clustered FL schemes.

\begin{figure*}[t]
  \centering
  \begin{subfigure}[t]{0.48\textwidth}
    \centering
    \includegraphics[width=\linewidth]{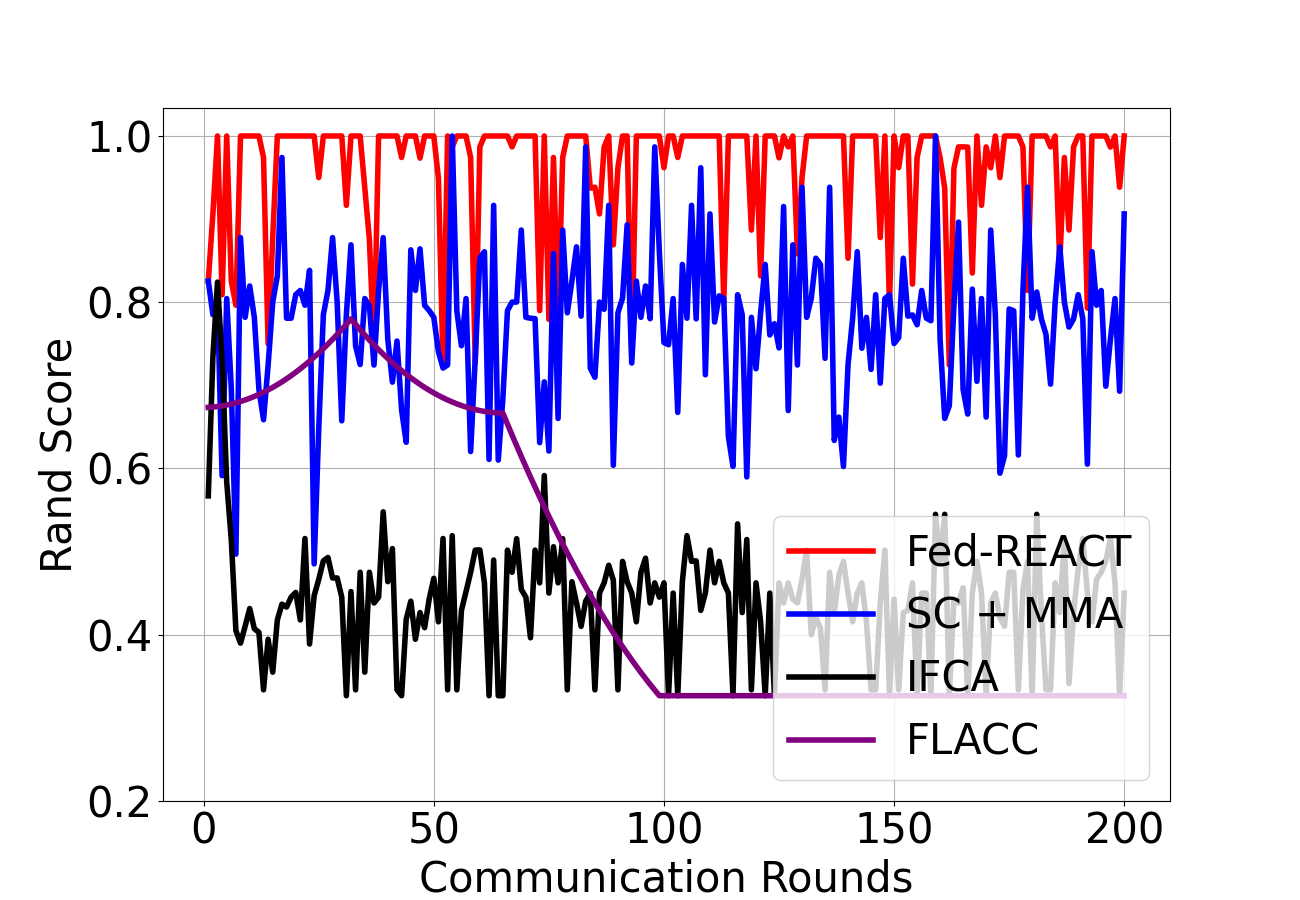}
    \caption{RTD (Strategy 1), 100 clients, $|\mathcal{M}^k_t| = 64$, $T_{\text{task}} = 200$.}
    \label{fig:nonstat100_85}
  \end{subfigure}
  \hfill
  \begin{subfigure}[t]{0.48\textwidth}
    \centering
    \includegraphics[width=\linewidth]{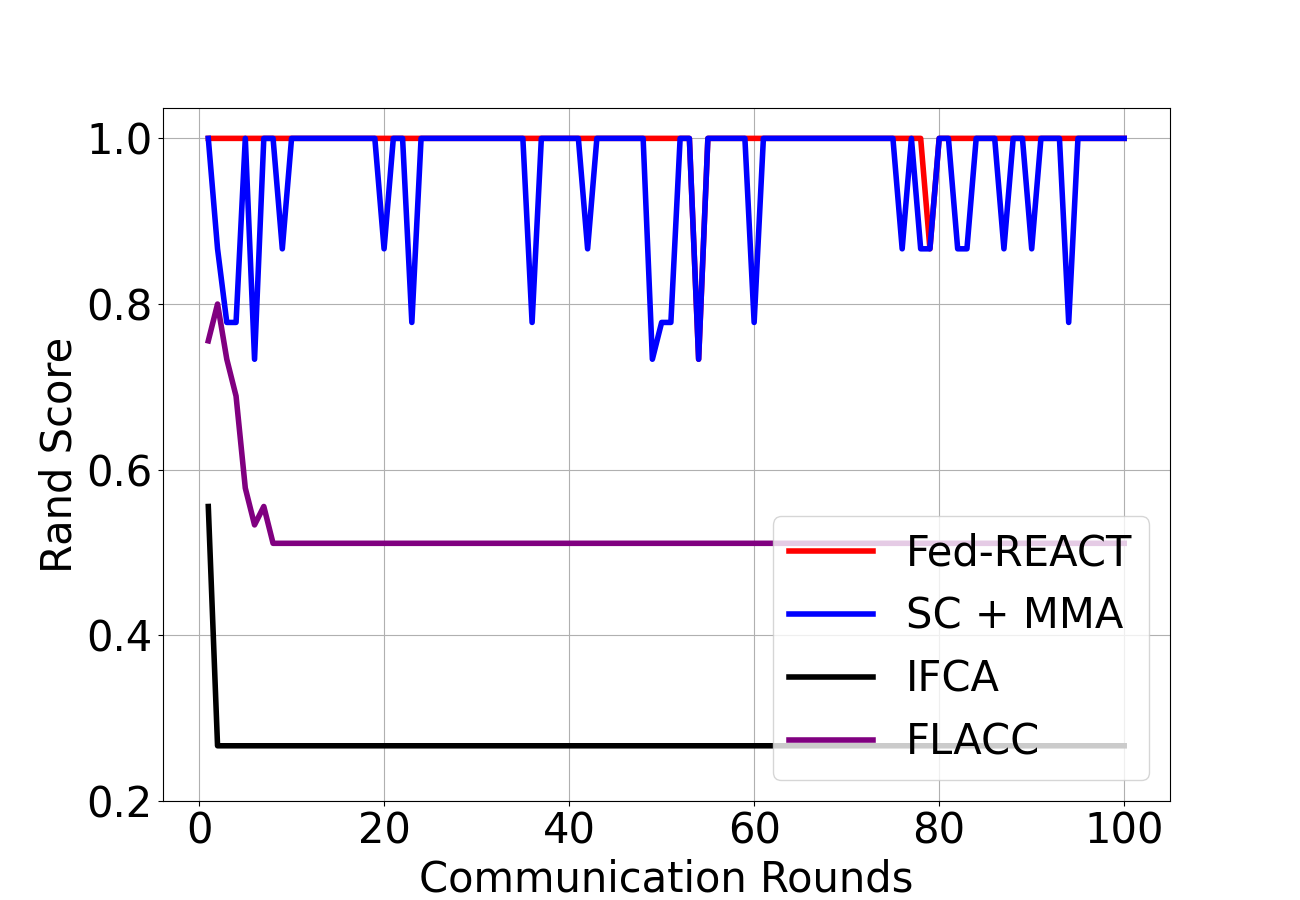}
    \caption{EEG, 10 clients, $|\mathcal{M}^k_t| = 384$, $T_{\text{task}} = 100$.}
    \label{fig:eegrand}
  \end{subfigure}
  \caption{\small Rand scores for client clustering under distribution drift (RTD, EEG).}
  \label{fig:rand_scores_combined}
\end{figure*}

\textbf{EEG dataset.}
The EEG dataset is inherently non-stationary, as neural signals from the brain can vary over time even for the same motor imagery task. We conduct the experiments as follows: at each round $t$, client $k$ trains its task model using $|\mathcal{M}^k_t| = 384$ labeled samples, for a total of 100 communication rounds. Rand scores for the different clustering methods are shown in Figure~\ref{fig:eegrand}. Fed-REACT rapidly recovers the correct cluster memberships, achieving a perfect Rand score of 1. In contrast, competing methods either fail to discover the correct clustering structure (IFCA, FLACC) or exhibit unstable cluster assignments as the data evolves. Task model accuracies are reported in the final row of Table~\ref{tab:cluster_acc}, where Fed-REACT again outperforms all baseline clustered FL methods.

\textbf{Computational and communication complexity.}
Fed-REACT is more communication-efficient than baseline clustered FL methods such as IFCA and FLSC, which require client-side cluster assignments and thus must transmit all cluster-specific models to each participating client. In contrast, Fed-REACT centralizes clustering at the server, resulting in constant communication cost with respect to the number of clusters. This efficiency comes with a trade-off: while IFCA and FLSC perform only simple model averaging at the server, Fed-REACT executes evolutionary clustering, whose complexity scales polynomially with the number of clients. For IFCA and FLSC, both clustering and communication costs scale linearly with the number of clusters and model size. Fed-REACT shifts the clustering burden to the server but avoids client-side computation and reduces communication overhead, especially in systems with many clusters.
% \vspace{-2mm}
\subsection{Ablation Analysis: Cluster Count and Data Heterogeneity}\label{section4-3}

% \begin{table}[htbp!]
% \centering
% \resizebox{\columnwidth}{!}{
% \begin{tabular}{c|c|c|c|c}% {P{2.0cm}|P{1.2cm}|P{1.2cm}|P{1.2cm}|P{1.2cm}|} 
% \toprule
% Method & $\alpha=0.10$ & $\alpha=0.25$ & $\alpha=0.50$ & $\alpha=2.0$ \\  \midrule
% Fed-REACT w/ A1 & 0.888 & \textbf{0.872} & \textbf{0.816} & \textbf{0.742} \\ \midrule
% Fed-REACT w/ A2 & \textbf{0.900} & 0.871 & 0.815 & 0.738 \\ \midrule
% SC + MMA & 0.887 & 0.868 & 0.809 & 0.712 \\ \midrule
% EC + MMA & 0.887 & 0.868 & 0.809 & 0.721 \\ \midrule
% IFCA & 0.889 & 0.872 & 0.711 & 0.730 \\ \midrule
% FLSC & 0.693 & 0.761 & 0.735 & 0.721 \\ \midrule
% FLACC & 0.579 & 0.620  & 0.629   & 0.635  \\ \midrule
% \end{tabular}
% }
% \caption{The test accuracy of clustered FL algorithms with varied values of Dirichlet distribution parameter $\alpha$; smaller $\alpha$ indicates higher level of heterogeneity.}
% \label{table:5}
% \end{table}
\begin{table*}[t]
  \centering
  \small
  \resizebox{0.95\linewidth}{!}{
  \begin{tabular}{c|c|c|c|c|c|c|c}\toprule
    Number of Clusters & Fed - REACT w/ A1 & Fed - REACT w/ A2 &  SC + MMA & EC + MMA & IFCA & FLSC &  FLACC \\ \midrule
    4  & \textbf{0.8146} & 0.8145 & 0.7906 & 0.8058 & 0.7663  &  0.7829 & 0.6585 \\ \midrule
    5  & \textbf{0.7920}  & 0.7914  & 0.7608  & 0.7826  & 0.7550  & 0.7504  &  0.6176  \\ \midrule
    6  & \textbf{0.8445}  & 0.8425  & 0.8120  & 0.8368  & 0.8316 & 0.7755 & 0.6018  \\ \midrule
    7  & \textbf{0.7682}  &  0.7674 & 0.7388  & 0.7550  & 0.7450 & 0.7599 &  0.6604 \\ \bottomrule
    \toprule 
    Dirichlet $\beta $  & Fed - REACT w/ A1 & Fed - REACT w/ A2 &  SC + MMA & EC + MMA & IFCA & FLSC &  FLACC \\ \midrule
    $\beta = 0.25$ & \textbf{0.872} & 0.871 & 0.868 & 0.868 & 0.872 & 0.761 & 0.620 \\ \midrule
    $\beta = 0.5$ & \textbf{0.816} & 0.815 & 0.809 & 0.809 & 0.711 & 0.735 & 0.629 \\ \midrule
    $\beta = 2$ & \textbf{0.742} & 0.738 & 0.712 & 0.721 & 0.730 & 0.721 & 0.635 \\
    \bottomrule
  \end{tabular}
  }
  \caption{\small Ablation study on Fed-REACT accuracy under varying numbers of ground-truth clusters and levels of data heterogeneity ($\beta$).}
%  \caption{Ablation study on groundtruth cluster number and Dirichlet $\beta$}
  \label{tab:numcluster_acc}
\end{table*}

We conduct ablation experiments on the RTD dataset to evaluate the sensitivity of Fed-REACT to two key factors: the number of clusters and the Dirichlet parameter $\beta$, which controls the degree of data heterogeneity. All experiments use 100 clients. To assess the effect of cluster granularity, we fix $\beta = 0.5$ and vary the number of ground-truth clusters $C \in \{4, 5, 6, 7\}$. To evaluate robustness to heterogeneity, we fix the number of clusters and vary $\beta \in \{0.25, 0.5, 2\}$, where smaller $\beta$ indicates greater distributional skew. As shown in Table~\ref{tab:numcluster_acc}, Fed-REACT consistently outperforms baseline clustering methods across all tested configurations, demonstrating strong robustness to both cluster granularity and data heterogeneity.

\section{Conclusion}

In this paper, we addressed the problem of federated self-supervised representation learning combined with (semi-)personalized task model training. To our knowledge, this is the first work to study this problem in the context of heterogeneous, evolving time series data. We proposed Fed-REACT, a two-phase framework that aggregates representation models globally and performs cluster-wise aggregation of task models—such as SVMs for classification and dense layers for regression. We provided theoretical analysis of the representation learning phase and demonstrated, through experiments on RTD, EEG, and SUMO EV datasets, that Fed-REACT consistently outperforms supervised FL baselines. Future work may explore fully decentralized settings in which clients must learn from evolving data without assistance from a central server.

\bibliography{neurips_paper}

\begin{thebibliography}{31}
\providecommand{\natexlab}[1]{#1}
\providecommand{\url}[1]{\texttt{#1}}
\expandafter\ifx\csname urlstyle\endcsname\relax
  \providecommand{\doi}[1]{doi: #1}\else
  \providecommand{\doi}{doi: \begingroup \urlstyle{rm}\Url}\fi

\bibitem[Alam et~al.(2020)Alam, Kwon, Alam, Abbass, Imtiaz, and Kim]{alam2020trajectory}
Md~Shahinur Alam, Ki-Chul Kwon, Md~Ashraful Alam, Mohammed~Y Abbass, Shariar~Md Imtiaz, and Nam Kim.
\newblock Trajectory-based air-writing recognition using deep neural network and depth sensor.
\newblock \emph{Sensors}, 20\penalty0 (2):\penalty0 376, 2020.

\bibitem[Arzeno and Vikalo(2019)]{arzeno2019evolutionary}
Natalia~M Arzeno and Haris Vikalo.
\newblock Evolutionary clustering via message passing.
\newblock \emph{IEEE Transactions on Knowledge and Data Engineering}, 33\penalty0 (6):\penalty0 2452--2466, 2019.

\bibitem[Aydore et~al.(2019)Aydore, Zhu, and Foster]{aydore2019dynamic}
Sergul Aydore, Tianhao Zhu, and Dean~P Foster.
\newblock Dynamic local regret for non-convex online forecasting.
\newblock \emph{Advances in neural information processing systems}, 32, 2019.

\bibitem[Chakrabarti et~al.(2006)Chakrabarti, Kumar, and Tomkins]{chakrabarti2006evolutionary}
Deepayan Chakrabarti, Ravi Kumar, and Andrew Tomkins.
\newblock Evolutionary clustering.
\newblock In \emph{Proceedings of the 12th ACM SIGKDD international conference on Knowledge discovery and data mining}, pages 554--560, 2006.

\bibitem[Chen et~al.(2020)Chen, Kornblith, Norouzi, and Hinton]{chen2020simple}
Ting Chen, Simon Kornblith, Mohammad Norouzi, and Geoffrey Hinton.
\newblock A simple framework for contrastive learning of visual representations.
\newblock In \emph{International conference on machine learning}, pages 1597--1607. PMLR, 2020.

\bibitem[Chen and He(2021)]{chen2021exploring}
Xinlei Chen and Kaiming He.
\newblock Exploring simple siamese representation learning.
\newblock In \emph{Proceedings of the IEEE/CVF conference on computer vision and pattern recognition}, pages 15750--15758, 2021.

\bibitem[Chen et~al.(2024)Chen, Vikalo, and Wang]{chen2024fed}
Yiyue Chen, Haris Vikalo, and Chianing Wang.
\newblock Fed-qssl: A framework for personalized federated learning under bitwidth and data heterogeneity.
\newblock In \emph{Proceedings of the AAAI Conference on Artificial Intelligence}, volume~38, pages 11443--11452, 2024.

\bibitem[Deng et~al.(2020)Deng, Kamani, and Mahdavi]{deng2020adaptive}
Yuyang Deng, Mohammad~Mahdi Kamani, and Mehrdad Mahdavi.
\newblock Adaptive personalized federated learning.
\newblock \emph{arXiv preprint arXiv:2003.13461}, 2020.

\bibitem[Eldele et~al.(2024)Eldele, Ragab, Chen, Wu, and Li]{eldele2024tslanet}
Emadeldeen Eldele, Mohamed Ragab, Zhenghua Chen, Min Wu, and Xiaoli Li.
\newblock Tslanet: Rethinking transformers for time series representation learning.
\newblock \emph{arXiv preprint arXiv:2404.08472}, 2024.

\bibitem[Fortuin et~al.(2018)Fortuin, H{\"u}ser, Locatello, Strathmann, and R{\"a}tsch]{fortuin2018som}
Vincent Fortuin, Matthias H{\"u}ser, Francesco Locatello, Heiko Strathmann, and Gunnar R{\"a}tsch.
\newblock Som-vae: Interpretable discrete representation learning on time series.
\newblock \emph{arXiv preprint arXiv:1806.02199}, 2018.

\bibitem[Fraikin et~al.(2023)Fraikin, Bennetot, and Allassonni{\`e}re]{fraikin2023t}
Archibald Fraikin, Adrien Bennetot, and St{\'e}phanie Allassonni{\`e}re.
\newblock T-rep: Representation learning for time series using time-embeddings.
\newblock \emph{arXiv preprint arXiv:2310.04486}, 2023.

\bibitem[Franceschi et~al.(2019)Franceschi, Dieuleveut, and Jaggi]{franceschi2019unsupervised}
Jean-Yves Franceschi, Aymeric Dieuleveut, and Martin Jaggi.
\newblock Unsupervised scalable representation learning for multivariate time series.
\newblock \emph{Advances in neural information processing systems}, 32, 2019.

\bibitem[Ghosh et~al.(2020)Ghosh, Chung, Yin, and Ramchandran]{ghosh2020efficient}
Avishek Ghosh, Jichan Chung, Dong Yin, and Kannan Ramchandran.
\newblock An efficient framework for clustered federated learning.
\newblock \emph{Advances in Neural Information Processing Systems}, 33:\penalty0 19586--19597, 2020.

\bibitem[Hochreiter(1997)]{hochreiter1997long}
S~Hochreiter.
\newblock Long short-term memory.
\newblock \emph{Neural Computation MIT-Press}, 1997.

\bibitem[Jin et~al.(2017)Jin, Ge, Netrapalli, Kakade, and Jordan]{jin2017escape}
Chi Jin, Rong Ge, Praneeth Netrapalli, Sham~M Kakade, and Michael~I Jordan.
\newblock How to escape saddle points efficiently.
\newblock In \emph{International conference on machine learning}, pages 1724--1732. PMLR, 2017.

\bibitem[Jin et~al.(2023)Jin, Wang, Ma, Chu, Zhang, Shi, Chen, Liang, Li, Pan, et~al.]{jin2023time}
Ming Jin, Shiyu Wang, Lintao Ma, Zhixuan Chu, James~Y Zhang, Xiaoming Shi, Pin-Yu Chen, Yuxuan Liang, Yuan-Fang Li, Shirui Pan, et~al.
\newblock Time-llm: Time series forecasting by reprogramming large language models.
\newblock \emph{arXiv preprint arXiv:2310.01728}, 2023.

\bibitem[Kim et~al.(2021)Kim, Al~Hakim, Haraldson, Eriksson, da~Silva, and Fischione]{kim2021dynamic}
Yeongwoo Kim, Ezeddin Al~Hakim, Johan Haraldson, Henrik Eriksson, Jos{\'e} Mairton~B da~Silva, and Carlo Fischione.
\newblock Dynamic clustering in federated learning.
\newblock In \emph{ICC 2021-IEEE International Conference on Communications}, pages 1--6. IEEE, 2021.

\bibitem[Krajzewicz et~al.(2012)Krajzewicz, Erdmann, Behrisch, and Bieker]{krajzewicz2012recent}
Daniel Krajzewicz, Jakob Erdmann, Michael Behrisch, and Laura Bieker.
\newblock Recent development and applications of sumo-simulation of urban mobility.
\newblock \emph{International journal on advances in systems and measurements}, 5\penalty0 (3\&4), 2012.

\bibitem[Kumar et~al.(2024)Kumar, Alawieh, Racz, Fakhreddine, and Millán]{10.1093/pnasnexus/pgae076}
Satyam Kumar, Hussein Alawieh, Frigyes~Samuel Racz, Rawan Fakhreddine, and José del~R Millán.
\newblock Transfer learning promotes acquisition of individual bci skills.
\newblock \emph{PNAS Nexus}, 3\penalty0 (2):\penalty0 pgae076, 02 2024.
\newblock ISSN 2752-6542.
\newblock \doi{10.1093/pnasnexus/pgae076}.
\newblock URL \url{https://doi.org/10.1093/pnasnexus/pgae076}.

\bibitem[Li et~al.(2021{\natexlab{a}})Li, Li, and Varshney]{li2021federated}
Chengxi Li, Gang Li, and Pramod~K Varshney.
\newblock Federated learning with soft clustering.
\newblock \emph{IEEE Internet of Things Journal}, 9\penalty0 (10):\penalty0 7773--7782, 2021{\natexlab{a}}.

\bibitem[Li et~al.(2020)Li, Sahu, Zaheer, Sanjabi, Talwalkar, and Smith]{li2020federated}
Tian Li, Anit~Kumar Sahu, Manzil Zaheer, Maziar Sanjabi, Ameet Talwalkar, and Virginia Smith.
\newblock Federated optimization in heterogeneous networks.
\newblock \emph{Proceedings of Machine learning and systems}, 2:\penalty0 429--450, 2020.

\bibitem[Li et~al.(2021{\natexlab{b}})Li, Hu, Beirami, and Smith]{li2021ditto}
Tian Li, Shengyuan Hu, Ahmad Beirami, and Virginia Smith.
\newblock Ditto: Fair and robust federated learning through personalization.
\newblock In \emph{International conference on machine learning}, pages 6357--6368. PMLR, 2021{\natexlab{b}}.

\bibitem[Mansour et~al.(2020)Mansour, Mohri, Ro, and Suresh]{mansour2020three}
Yishay Mansour, Mehryar Mohri, Jae Ro, and Ananda~Theertha Suresh.
\newblock Three approaches for personalization with applications to federated learning.
\newblock \emph{arXiv preprint arXiv:2002.10619}, 2020.

\bibitem[McMahan et~al.(2017)McMahan, Moore, Ramage, Hampson, and y~Arcas]{mcmahan2017communication}
Brendan McMahan, Eider Moore, Daniel Ramage, Seth Hampson, and Blaise~Aguera y~Arcas.
\newblock Communication-efficient learning of deep networks from decentralized data.
\newblock In \emph{Artificial intelligence and statistics}, pages 1273--1282. PMLR, 2017.

\bibitem[Mehta and Shao(2023)]{mehta2023greedy}
Manan Mehta and Chenhui Shao.
\newblock A greedy agglomerative framework for clustered federated learning.
\newblock \emph{IEEE Transactions on Industrial Informatics}, 19\penalty0 (12):\penalty0 11856--11867, 2023.

\bibitem[Nie et~al.(2022)Nie, Nguyen, Sinthong, and Kalagnanam]{nie2022time}
Yuqi Nie, Nam~H Nguyen, Phanwadee Sinthong, and Jayant Kalagnanam.
\newblock A time series is worth 64 words: Long-term forecasting with transformers.
\newblock \emph{arXiv preprint arXiv:2211.14730}, 2022.

\bibitem[Wang et~al.(2022)Wang, Zhang, Li, Tian, and Tedrake]{wang2022does}
Lirui Wang, Kaiqing Zhang, Yunzhu Li, Yonglong Tian, and Russ Tedrake.
\newblock Does learning from decentralized non-iid unlabeled data benefit from self supervision?
\newblock \emph{arXiv preprint arXiv:2210.10947}, 2022.

\bibitem[Wu et~al.(2022)Wu, Hu, Liu, Zhou, Wang, and Long]{wu2022timesnet}
Haixu Wu, Tengge Hu, Yong Liu, Hang Zhou, Jianmin Wang, and Mingsheng Long.
\newblock Timesnet: Temporal 2d-variation modeling for general time series analysis.
\newblock \emph{arXiv preprint arXiv:2210.02186}, 2022.

\bibitem[Xu et~al.(2014)Xu, Kliger, and Hero~III]{xu2014adaptive}
Kevin~S Xu, Mark Kliger, and Alfred~O Hero~III.
\newblock Adaptive evolutionary clustering.
\newblock \emph{Data Mining and Knowledge Discovery}, 28:\penalty0 304--336, 2014.

\bibitem[Zeng et~al.(2024)Zeng, Xiong, and Shi]{zeng2024metaclusterfl}
Hui Zeng, Shiyu Xiong, and Hongzhou Shi.
\newblock Metaclusterfl: Personalized federated learning on non-iid data with meta-learning and clustering.
\newblock In \emph{2024 International Joint Conference on Neural Networks (IJCNN)}, pages 1--10. IEEE, 2024.

\bibitem[Zhao et~al.(2018)Zhao, Li, Lai, Suda, Civin, and Chandra]{zhao2018federated}
Yue Zhao, Meng Li, Liangzhen Lai, Naveen Suda, Damon Civin, and Vikas Chandra.
\newblock Federated learning with non-iid data.
\newblock \emph{arXiv preprint arXiv:1806.00582}, 2018.

\end{thebibliography}
\bibliographystyle{plainnat}

\newpage
\appendix

\begin{center}
\Large
\textbf{Appendix}
\end{center}

This appendix is organized as follows:
\begin{itemize}
  \item \textbf{Section~1} describes the procedure for training task models using snapshot clustering.
  
  \item \textbf{Section~2} elaborates on the computation of the adaptive forgetting factor used in the evolutionary clustering algorithm.

  \item \textbf{Section~3} provides detailed proofs of the lemmas and theorem presented in the main text.

  \item \textbf{Section~4} outlines experimental implementation details.

  \item In the main paper, we assume the number of clusters is known a priori. \textbf{Section~5} explores strategies for estimating the number of clusters from task model output weights.

  \item \textbf{Section~6} demonstrates the compatibility of Fed-REACT with intermittent client participation.

  \item \textbf{Section~7} presents additional results on the RTD dataset under stationary cluster distributions. It also introduces \textit{Strategy 3}, where clients may migrate between clusters with high probability.

  \item For the SUMO dataset, the main text reports accuracy using fully personalized models, as the number of clusters is unknown. \textbf{Section~8} reports Fed-REACT performance under different assumed cluster counts.

  \item \textbf{Section~9} presents an ablation study on cluster heterogeneity and the number of clusters in the RTD dataset, including both stationary and non-stationary settings (Strategies 1 and 2).

  \item \textbf{Section~10} evaluates the performance of the time-smoothed gradient descent algorithm used in the representation learning phase.

  \item \textbf{Section~11} analyzes the computational and communication complexity of Fed-REACT compared to baseline methods.
\end{itemize}

\section{Task model training assisted by snapshot clustering}

Snapshot clustering groups clients based on the current weights of the task model / output layer, and then averages those weights to arrive at a cluster-specific task model. This procedure is formalized as Algorithm \ref{alg2} below. Note that snaphshot clustering may provide satisfactory performance when clients have exceedingly large number of labeled samples in training batches so that the models do not experience training variations. 

\setcounter{algorithm}{2}
\begin{algorithm}[!htbp]
\caption{Training of the task model assisted by snapshot clustering} \label{alg2}
\begin{algorithmic}[1]
\STATE \textbf{Initialize:} Global encoder parameters $\mathbf{\theta}_T$ obtained after $T$ rounds of federated representation learning presented in Alg. 1 in the main content

\FOR {client $k=1,2,..,K$}
\STATE Client $k$ trains the task model on the labeled local data. 
\STATE Client $k$ uploads the parameters $\mathbf{\theta}_{task}^k$ 
 of the task model to the server
\ENDFOR
\STATE Server clusters clients based on the weights of the task model $\{ \mathbf{\theta}_{task}^k \}_{k=1}^K$ and employs Agglomerative Hierarchical Clustering.
\FOR{cluster $c=1,2,..,C$ }
    \STATE Server aggregates the task model within cluster. Let $\mathcal{S}_t^c$ denote the set of clients in cluster $c$. Then

    \[
    \mathbf{\theta_{task}}^c = \sum_{k \in  \mathcal{S}_t^c} \frac{m_k}{M_c} \mathbf{\theta}_{task}^k
    \]
    where $m_k$ is the number of labeled samples on client $k$ and $M_c = \sum_{k \in  \mathcal{S}_t^c} m_k$

    \STATE Server transmits $\mathbf{\theta^c_{task}}$ to all clients $k \in \mathcal{S}_t^c $ 
\ENDFOR
\end{algorithmic}
\end{algorithm}

\section{Calculation of the forgetting factor $a_t$}

\begin{algorithm}[!htbp]
\caption{Estimating $a_t$ iteratively} \label{algAFFECT}
\begin{algorithmic}[1]

\FOR{iteration $iter=1,2,..,Max Iterations$  }
    \STATE Estimate $\mathcal{S}_t^c$ given $\hat{\psi}_{i,j,t-1}$, $\hat{a}_t$ which yield  $[\hat{\psi}_t]_{i, j}$. In our work, this is done via Agglomerative Hierarchical Clustering.   

    \STATE Compute $\hat{\mathbb{E}}[[W_t]_{i, j}]$ and $\hat{Var}([W_t]_{i, j})$ based on $\mathcal{S}_t^c$ as described above
    
    \STATE Estimate $\hat{a}_t$ using equation (\ref{AFFECT}).
    
\ENDFOR
\end{algorithmic}
\end{algorithm}

For completeness, we here summarize the derivation of the adaptive forgetting factor presented in \cite{xu2014adaptive}. Let $K$ denote the total number of clients, and let $L(a_t)$ be the Frobenius norm of the difference between the estimated and the true similarity matrix, i.e., 
\begin{align}
L(a_t)=\|\psi_{t}-a_t \hat{\psi}_{t-1} - (1-a_t)W_t \|_F^2
\end{align}

Then the risk function $R(a_t)=\mathbb{E}[L(a_t)]$ can be shown to take the form
\begin{align}
R(a_t)=\sum_{i=1}^{K}\sum_{j=1}^{K} \{(1-a_t)^2 Var([W_t]_{i, j})+a_t^2 ([\hat{\psi}_t]_{i, j}-[\psi_{t-1}]_{i, j})^2\},
\end{align}
where $[W_t]_{i, j}$, $[\hat{\psi}_t]_{i, j}$ and $[\psi_t]_{i, j}$ % $\hat{\psi}_{i,j,t}$, and $\psi_{i,j,t}$ 
denote the entries at index $(i,j)$ of matrices $W_{t}$, $\hat{\psi}_{t}$ and $\psi_{t}$, respectively. To obtain this expression, it is assumed that $\mathbb{E}[[W_t]_{i, j}]=[\psi_t]_{i, j}$ and $Var([\psi_t]_{i, j})=0$. Taking the first derivative of $R(a_t)$ w.r.t to $a$ and setting it to zero yields
\begin{align}
\hat{a}_t=\frac{\sum_{i=1}^{K}\sum_{j=1}^{K} Var([W_t]_{i, j})}{\sum_{i=1}^{K}\sum_{j=1}^{K} ([\hat{\psi}_t]_{i, j}-[\psi_t]_{i, j})^2+Var([W_t]_{i, j})}. \label{AFFECT}
\end{align}

Note that the calculation in (\ref{AFFECT}) requires $\mathbb{E}[[W_t]_{i, j}]$ and $Var([W_t]_{i, j})$, which in turn requires knowledge of the clustering solution $\mathcal{S}_t^c$, which depends on $a_t$. \cite{xu2014adaptive} proposed to estimate $\mathbb{E}[[W_t]_{i, j}]$, $Var([W_t]_{i, j})$ and $a_t$ iteratively.  Suppose client $l$ is assigned to cluster $c$; then for $j \neq l$,
\begin{align}
\hat{\mathbb{E}}[[W_t]_{i, j}]=\sum_{i=l}\sum_{j\in c, j \neq l}\frac{1}{|c||c-1|} [W_t]_{i, j} \label{AFFECT2}
\end{align}
and
\begin{align}
\hat{\mathbb{E}}[[W_t]_{i, j}]=\sum_{i=1}^C\frac{1}{C} W_{i,i}. \label{AFFECT3}
\end{align}
For $k$ and $l$ in distinct clusters $c$ and $d$, respectively, it holds that
\begin{align}
\hat{\mathbb{E}}[[W_t]_{k, l}]=\sum_{i\in c}\sum_{j\in d}\frac{1}{|c||d|} [W_t]_{i, j}. \label{AFFECT4}
\end{align}
Estimates of the variances can be computed in a similar manner and are thus omitted for the sake of brevity. The resulting procedure is formalized as Algorithm \ref{algAFFECT}. In our simulations, we set the number of iterations to $5$.

\section{Proof of Theorem}
Recall the assumption in the main paper, 

{\bf Assumption 3.1.} (a) Loss function $f_{t, i}$ is bounded above by $M$ for all clients $i$ and times $t$. (b) Loss function $f_{t, i}$ is $L$-Lipschitz and $\beta$-smooth. (c) The stochastic gradient $\Tilde{\nabla} f(\cdot)$ is unbiased and its standard deviation 
%of the estimator 
is bounded above by $\sigma$. The error between the projected stochastic gradient $Proj \Tilde{\nabla} f(\cdot)$ and the stochastic gradient $ \Tilde{\nabla} f(\cdot)$ is $\epsilon_{proj} = Proj (\Tilde{\nabla} f(\cdot)) -\Tilde{\nabla} f(\cdot) $ with $\| \epsilon_{proj} \|^2 \leq \epsilon^2$.

and the defined local regret at client $k$ and the global regret as
\[
S_{t, w, \gamma, k}(\theta_t) = \frac{1}{W}\sum_{j=0}^{w-1} \gamma^j f_{t-j, k}(\theta_{t-j}), \;\; 
S_{t, w, \gamma}(\theta_t) = \frac{1}{K}\sum_{k=1}^K \frac{1}{W}\sum_{j=0}^{w-1} \gamma^j f_{t-j, k}(\theta_{t-j}),
\]
respectively.

With the assumption, we first obtain the following lemmas:

\begin{lemma}
Suppose all of the above assumptions are satisfied. Then for any $\gamma \in (0, 1)$, $\beta$ and $\eta$, it holds that
\begin{align*}
    (\frac{\eta}{4} - \frac{\eta^2\beta}{8}) \| \nabla S_{t, w, \gamma}(\theta_t) \|^2  &  \leq S_{t, w, \gamma}(\theta_{t}) - S_{t+1, w, \gamma}(\theta_{t+1}) +  S_{t+1, w, \gamma}(\theta_{t+1}) - S_{t, w, \gamma}(\theta_{t+1}) \\ &  + \eta^2 \frac{\beta}{4}\frac{\sigma^2 (1-\gamma^{2w})}{W^2(1-\gamma^2)}  \nonumber + (\frac{\eta}{4} + \frac{3\eta^2\beta}{8} )\epsilon^2.
\end{align*}
\end{lemma}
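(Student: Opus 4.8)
The plan is to carry out a standard descent-lemma computation for the time-smoothed objective, treating the one-step update $\theta_{t+1} = \theta_t - \eta \,\widetilde{\nabla} S_{t,w,\gamma}(\theta_t)$ (where $\widetilde{\nabla} S$ is the averaged smoothed stochastic gradient) and then summing. First I would invoke $\beta$-smoothness of each $f_{t-j,k}$, which lifts to $\beta$-smoothness of the convex combination $S_{t,w,\gamma}$, to write
\begin{align*}
S_{t,w,\gamma}(\theta_{t+1}) \le S_{t,w,\gamma}(\theta_t) + \langle \nabla S_{t,w,\gamma}(\theta_t), \theta_{t+1}-\theta_t\rangle + \frac{\beta}{2}\|\theta_{t+1}-\theta_t\|^2.
\end{align*}
Substituting the update and taking conditional expectation given $\theta_t$, the cross term produces $-\eta\|\nabla S_{t,w,\gamma}(\theta_t)\|^2$ up to the projection perturbation, and the quadratic term produces $\frac{\eta^2\beta}{2}\,\mathbb{E}\|\widetilde{\nabla}S_{t,w,\gamma}(\theta_t)\|^2$, which splits via the bias-variance decomposition into $\frac{\eta^2\beta}{2}(\|\nabla S_{t,w,\gamma}(\theta_t)\|^2 + \text{variance})$, and the variance is controlled by the stated bound $\sigma^2(1-\gamma^{2w})/(W^2(1-\gamma^2))$ (after averaging over the $K$ clients).

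The projection error needs care: since the actual update uses $\mathrm{Proj}\,\widetilde{\nabla} f_{t-j,k}$ rather than $\widetilde{\nabla} f_{t-j,k}$, I would write $\mathrm{Proj}\,\widetilde{\nabla} S = \widetilde{\nabla} S + \bar\epsilon_{\mathrm{proj}}$ where $\bar\epsilon_{\mathrm{proj}}$ is an average of per-term projection errors, so $\|\bar\epsilon_{\mathrm{proj}}\|^2 \le \epsilon^2$ by Jensen/convexity of $\|\cdot\|^2$ together with Assumption 3.1(c). Then in both the cross term and the quadratic term I would use Young's inequality (e.g. $\langle a,b\rangle \ge -\tfrac14\|a\|^2 - \|b\|^2$ and $\|a+b\|^2 \le 2\|a\|^2 + 2\|b\|^2$, tuned to produce the exact constants $\tfrac14$, $\tfrac18$, $\tfrac{3}{8}$, $\tfrac14$ appearing in the lemma) to absorb $\bar\epsilon_{\mathrm{proj}}$ into an $O(\epsilon^2)$ additive term while keeping a $(\tfrac{\eta}{4}-\tfrac{\eta^2\beta}{8})\|\nabla S_{t,w,\gamma}(\theta_t)\|^2$ coefficient on the gradient-norm term. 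Rearranging gives
\begin{align*}
\Bigl(\tfrac{\eta}{4}-\tfrac{\eta^2\beta}{8}\Bigr)\|\nabla S_{t,w,\gamma}(\theta_t)\|^2 \le S_{t,w,\gamma}(\theta_t) - S_{t,w,\gamma}(\theta_{t+1}) + \tfrac{\eta^2\beta}{4}\cdot\tfrac{\sigma^2(1-\gamma^{2w})}{W^2(1-\gamma^2)} + \Bigl(\tfrac{\eta}{4}+\tfrac{3\eta^2\beta}{8}\Bigr)\epsilon^2,
\end{align*}
and finally I would insert the telescoping-friendly splitting $S_{t,w,\gamma}(\theta_t)-S_{t,w,\gamma}(\theta_{t+1}) = [S_{t,w,\gamma}(\theta_t)-S_{t+1,w,\gamma}(\theta_{t+1})] + [S_{t+1,w,\gamma}(\theta_{t+1})-S_{t,w,\gamma}(\theta_{t+1})]$, which is exactly the form stated.

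The main obstacle I anticipate is bookkeeping the constants so that the Young's-inequality splitting lands precisely on $\tfrac{\eta}{4}-\tfrac{\eta^2\beta}{8}$ and $\tfrac{\eta}{4}+\tfrac{3\eta^2\beta}{8}$ rather than some other admissible pair; this forces a specific choice of the weights in the two applications of Young's inequality (one on the inner-product term, one on the squared-norm term), and one has to check that the leftover projection-error contributions from both applications sum to exactly $(\tfrac{\eta}{4}+\tfrac{3\eta^2\beta}{8})\epsilon^2$. A secondary subtlety is that $S_{t+1,w,\gamma}$ and $S_{t,w,\gamma}$ are different functions (different sliding windows), so the descent inequality is genuinely applied to $S_{t,w,\gamma}$ at two successive iterates and the "window-shift" discrepancy $S_{t+1,w,\gamma}(\theta_{t+1})-S_{t,w,\gamma}(\theta_{t+1})$ must be carried along untouched here — it is presumably bounded later (using boundedness by $M$, Assumption 3.1(a)) when this lemma is summed to prove Theorem~\ref{theorem1}.
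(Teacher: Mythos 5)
Your proposal is correct and follows essentially the same route as the paper's proof: a $\beta$-smoothness descent step for $S_{t,w,\gamma}$ with the update written as $-\eta(\widetilde{\nabla}S_{t,w,\gamma}(\theta_t)+\epsilon_{\mathrm{proj}})$, Young-type splits to absorb the projection error, conditional expectation with the smoothed-gradient variance bound $\sigma^2(1-\gamma^{2w})/(W^2(1-\gamma^2))$, and finally the telescoping-friendly splitting into $[S_t(\theta_t)-S_{t+1}(\theta_{t+1})]+[S_{t+1}(\theta_{t+1})-S_t(\theta_{t+1})]$, with the window-shift term indeed deferred to the later lemmas. The only caveat, which you anticipated, is the constant bookkeeping on the variance term (your $\tfrac{\eta^2\beta}{2}$ versus the stated $\tfrac{\eta^2\beta}{4}$), a discrepancy that appears in the paper's own derivation as well.
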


\begin{lemma}
Suppose all of the above assumptions are satisfied. Then for any $\gamma \in (0, 1)$ and $w$, it holds that
\begin{align*}
    S_{t+1, w, \gamma}(\theta_{t+1}) - S_{t, w, \gamma}(\theta_{t+1}) \leq \frac{M(1+\gamma^{w-1})}{W} \nonumber + \frac{M(1-\gamma^{w-1})(1+ \gamma)}{W(1-\gamma)}.
\end{align*}
\end{lemma}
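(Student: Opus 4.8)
The plan is to unfold both smoothed objectives at the common evaluation point $\theta_{t+1}$ and track how the $\gamma$-weighted window slides forward by one step. Writing $\bar f_{s}(\theta):=\frac{1}{K}\sum_{k=1}^{K}f_{s,k}(\theta)$, we have $S_{t+1,w,\gamma}(\theta_{t+1})=\frac{1}{W}\sum_{j=0}^{w-1}\gamma^{j}\bar f_{t+1-j}(\theta_{t+1})$ and $S_{t,w,\gamma}(\theta_{t+1})=\frac{1}{W}\sum_{j=0}^{w-1}\gamma^{j}\bar f_{t-j}(\theta_{t+1})$. The two sums cover the same block of times $t,t-1,\dots,t-w+2$ but differ at the single ``entering'' index $t+1$ (present only in the first) and the single ``leaving'' index $t-w+1$ (present only in the second); aligning the remaining terms $\bar f_{t+1-j}$ across the two sums gives the window-shift identity
\begin{align*}
S_{t+1,w,\gamma}(\theta_{t+1})-S_{t,w,\gamma}(\theta_{t+1}) &=\frac{1}{W}\bar f_{t+1}(\theta_{t+1})-\frac{1-\gamma}{W}\sum_{l=0}^{w-2}\gamma^{l}\bar f_{t-l}(\theta_{t+1})\\
&\quad-\frac{\gamma^{w-1}}{W}\bar f_{t-w+1}(\theta_{t+1}),
\end{align*}
so the newest loss enters with coefficient $1/W$, the oldest loss leaves with coefficient $-\gamma^{w-1}/W$, and each of the $w-1$ overlapping losses has its weight scaled down by $\gamma$, contributing coefficient $-(1-\gamma)\gamma^{l}/W$.

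Next I would invoke Assumption~3.1(a) — so that $|f_{s,k}|\le M$, hence $|\bar f_{s}|\le M$, for every $s$ on the projection region — and bound each group of terms by the magnitude of its coefficient times $M$: the entering term by $M/W$, the leaving term by $\gamma^{w-1}M/W$, and the overlapping block by $\frac{1-\gamma}{W}M\sum_{l=0}^{w-2}\gamma^{l}=\frac{1-\gamma^{w-1}}{W}M$ (a finite geometric sum). Adding these gives $\frac{2M}{W}$, which already proves the lemma, since $W=\sum_{j=0}^{w-1}\gamma^{j}\ge 1$ forces $\frac{2M}{W}\le 2M$, and a one-line algebraic check (using $W=\frac{1-\gamma^{w}}{1-\gamma}$) shows that the displayed right-hand side $\frac{M(1+\gamma^{w-1})}{W}+\frac{M(1-\gamma^{w-1})(1+\gamma)}{W(1-\gamma)}$ is in fact exactly $2M$. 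The reason for keeping the $w,\gamma$-dependent form (rather than writing $2M$) is that the first lemma above consumes $S_{t+1,w,\gamma}(\theta_{t+1})-S_{t,w,\gamma}(\theta_{t+1})$ directly, and one wants the residual term in the final regret bound to display its scaling with the smoothing window. An even quicker route: each $S_{\tau,w,\gamma}(\theta_{t+1})$ is a convex combination of quantities in $[-M,M]$ — the weights $\gamma^{j}/W$ are nonnegative and sum to one — so $S_{t+1,w,\gamma}(\theta_{t+1})\le M$ and $S_{t,w,\gamma}(\theta_{t+1})\ge -M$, giving $2M$ at once.

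This is essentially a bookkeeping lemma, so there is no real obstacle; the only step demanding care is the re-indexing of the shifted window — getting the two boundary terms (indices $t+1$ and $t-w+1$) and the signs of the rescaled overlapping weights exactly right — together with the edge case $t<w$, where the window is not yet full, which is handled by the standard short-window convention (shorten the window and renormalize $W$, or pad with zero losses for $s\le 0$), under which the same inequality holds verbatim. I would also flag that bounding the negatively-weighted terms uses $M$ as a two-sided bound on the loss, which is legitimate here because the quadratic objective $\tfrac12\|\theta^{\top}\theta-\bar X\|^{2}-\tfrac12\|\bar X\|^{2}$ is bounded below on the projection region; if instead one only assumed the losses nonnegative, those terms would be $\le 0$ and the difference would be at most $M/W$, so the stated bound holds a fortiori.
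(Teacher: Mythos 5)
Your proof is correct, and it rests on the same elementary mechanism as the paper's -- bound every loss in the shifted window by $M$ and sum the geometric series -- but the decomposition is genuinely different because you and the paper read $S_{t,w,\gamma}(\theta_{t+1})$ differently. The paper sticks to its trajectory-based definition of the smoothed regret, writes the difference as $\frac{1}{W}\sum_{j=0}^{w-1}\gamma^{j}\bigl(f_{t+1-j}(\theta_{t+1-j})-f_{t-j}(\theta_{t+1-j})\bigr)$, and then groups by loss index: the two boundary losses $f_{t+1}(\theta_{t+1})$ and $-\gamma^{w-1}f_{t-w+1}(\theta_{t-w+2})$ give $M(1+\gamma^{w-1})/W$, while each interior loss $f_{t-j}$ appears twice at \emph{different} iterates with weights $\gamma^{j}$ and $\gamma^{j+1}$, giving $M(1+\gamma)(1-\gamma^{w-1})/\bigl(W(1-\gamma)\bigr)$ -- which is exactly the displayed right-hand side. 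You instead evaluate both windows at the common point $\theta_{t+1}$ (the single-argument reading, which is in fact the one consistent with how $S_{t,w,\gamma}$ is differentiated in the first lemma), obtain the entering/leaving/rescaled-overlap identity, and get the tighter bound $2M/W$; since $W\ge 1$ and, as you verify, the stated right-hand side is identically $2M$, the lemma follows a fortiori. Note that your refinement to $2M/W$ is only available under your reading: under the paper's trajectory reading the interior weights are $\gamma^{j}+\gamma^{j+1}$ rather than $(1-\gamma)\gamma^{j}$, so only the $2M$ bound survives -- but that is all the statement asserts, so both readings prove it. Your caveat about needing a two-sided bound is also well taken: Assumption 3.1(a) only states an upper bound, yet the paper's proof silently uses $|f_t(x)|\le M$, precisely the point you flag and repair.
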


\begin{lemma}
Suppose all of the above assumptions are satisfied. Then for any $\gamma \in (0, 1)$ and $w$, it holds that
\begin{align*}
    S_{t, w, \gamma}(\theta_{t}) - S_{t+1, w, \gamma}(\theta_{t+1}) &  \leq \frac{2M(1-\gamma^w)}{W(1-\gamma)}.
\end{align*}
\end{lemma}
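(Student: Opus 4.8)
The plan is to bound the two smoothed regret values in the statement separately and then subtract. Unfolding the definition of the global smoothed regret, for each client $k$ the quantity $\frac{1}{W}\sum_{j=0}^{w-1}\gamma^j f_{t-j,k}(\theta_{t-j})$ is a combination of loss values with nonnegative weights $\gamma^j/W$, and by the geometric-series identity $\sum_{j=0}^{w-1}\gamma^j = \frac{1-\gamma^w}{1-\gamma} = W$ these weights sum to one. Applying the upper bound $f_{t-j,k}(\cdot)\le M$ from Assumption~3.1(a) term by term gives
\[
S_{t,w,\gamma}(\theta_t) = \frac{1}{K}\sum_{k=1}^K \frac{1}{W}\sum_{j=0}^{w-1}\gamma^j f_{t-j,k}(\theta_{t-j}) \;\le\; \frac{M}{W}\sum_{j=0}^{w-1}\gamma^j \;=\; \frac{M(1-\gamma^w)}{W(1-\gamma)} \;=\; M .
\]

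Running the same computation at round $t+1$ and using the lower bound $f_{t,i}(\cdot)\ge -M$ — which holds because the contrastive loss is nonnegative (equivalently, because the simplified quadratic objective is bounded below on the projection region $\{\theta:\|\theta\|^2\le\Gamma\}$ maintained by the projection step) — yields $-\,S_{t+1,w,\gamma}(\theta_{t+1}) \le \frac{M(1-\gamma^w)}{W(1-\gamma)}$. Adding the two estimates gives
\[
S_{t,w,\gamma}(\theta_t) - S_{t+1,w,\gamma}(\theta_{t+1}) \;\le\; \frac{2M(1-\gamma^w)}{W(1-\gamma)},
\]
which is exactly the claim (and equals $2M$).

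A slightly sharper alternative — the one that presumably feeds the $1/W$ factor into Theorem~\ref{theorem1} — telescopes inside the smoothing window first. Shifting the summation index by one in $S_{t+1,w,\gamma}(\theta_{t+1})$ and regrouping produces the identity
\[
S_{t,w,\gamma}(\theta_t) - S_{t+1,w,\gamma}(\theta_{t+1}) = \frac{1}{KW}\sum_{k=1}^K\left[\,-f_{t+1,k}(\theta_{t+1}) + (1-\gamma)\sum_{j=0}^{w-2}\gamma^j f_{t-j,k}(\theta_{t-j}) + \gamma^{w-1} f_{t-w+1,k}(\theta_{t-w+1})\right].
\]
Discarding the nonpositive first term (using nonnegativity of the loss) and bounding the rest by $M$, via $(1-\gamma)\sum_{j=0}^{w-2}\gamma^j + \gamma^{w-1} = 1$, gives the stronger $S_{t,w,\gamma}(\theta_t) - S_{t+1,w,\gamma}(\theta_{t+1}) \le M/W$, which is a fortiori at most $\frac{2M(1-\gamma^w)}{W(1-\gamma)}=2M$ since $W\ge 1$.

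I do not expect any substantive obstacle: both routes are elementary once the definitions are unfolded. The only delicate point is the boundedness hypothesis on $f_{t,i}$ — Assumption~3.1(a) literally states only an upper bound, so controlling $-S_{t+1,w,\gamma}(\theta_{t+1})$ requires also the nonnegativity of the contrastive loss (or the explicit quadratic form of the simplified objective plus the projection step) — together with correctly evaluating the geometric sums $\sum_{j=0}^{w-1}\gamma^j = W$ and $(1-\gamma)\sum_{j=0}^{w-2}\gamma^j + \gamma^{w-1} = 1$.
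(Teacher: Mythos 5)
Your main argument is essentially the paper's own proof: the paper pairs the $j$-th terms of $S_{t,w,\gamma}(\theta_t)$ and $S_{t+1,w,\gamma}(\theta_{t+1})$ and bounds each difference $f_{t-j}(\theta_{t-j})-f_{t+1-j}(\theta_{t+1-j})$ by $2M$ using the two-sided bound $|f_t(\cdot)|\le M$ (which is how the appendix restates Assumption 3.1(a), so your concern about the missing lower bound is addressed there), and summing the geometric series gives exactly $\frac{2M(1-\gamma^w)}{W(1-\gamma)}=2M$, the same number your bound-each-side-by-$M$ computation produces since $W=\frac{1-\gamma^w}{1-\gamma}$. Your sharper $M/W$ telescoping variant is a valid aside but goes beyond the paper and genuinely requires the nonnegativity you flag, which holds for the original contrastive loss but not obviously for the simplified quadratic surrogate actually analyzed.
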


\begin{proof}

Using $\beta$-smoothness assumption of $f_{t, k}$ functions, it can be shown that $S_t$ is $\beta$-smooth. Then we have
\begin{align*}
    & S_{t, w, \gamma}(\theta_{t+1}) - S_{t, w, \gamma}(\theta_t)  = \frac{1}{K}\sum_{k=1}^K S_{t, w, \gamma, k}(\theta_{t+1}) - S_{t, w, \gamma, k}(\theta_t) \\
    & \leq \frac{1}{K}\sum_{k=1}^K \langle \nabla S_{t, w, \gamma, k}(\theta_t), \theta_{t+1} - \theta_t \rangle + \frac{\beta}{2} \|\theta_{t+1} - \theta_t \|^2 \\
    & = \langle \nabla S_{t, w, \gamma}(\theta_t), \theta_{t+1} - \theta_t \rangle + \frac{\beta}{2} \|\theta_{t+1} - \theta_t \|^2 \\
    & = -\frac{\eta}{2} \langle \nabla S_{t, w, \gamma}(\theta_t),  \Tilde{\nabla} S_{t, w, \gamma}(\theta_t) + \epsilon_{proj} \rangle -\frac{\eta}{2} \langle \nabla S_{t, w, \gamma}(\theta_t),  \Tilde{\nabla} S_{t, w, \gamma}(\theta_t) + \epsilon_{proj} -\nabla S_{t, w, \gamma}(\theta_t) \rangle \\
    & - \frac{\eta}{2} \|\nabla S_{t, w, \gamma}(\theta_t) \|^2 + \frac{\eta^2 \beta}{4} \|\Tilde{\nabla} S_{t, w, \gamma}(\theta_t) + \epsilon_{proj} -  \nabla S_{t, w, \gamma}(\theta_t) + \nabla S_{t, w, \gamma}(\theta_t) \|^2 \\
    & + \frac{\eta^2\beta}{4} \|\Tilde{\nabla} S_{t, w, \gamma}(\theta_t) + \epsilon_{proj} \|^2
\end{align*}
where $\epsilon_{proj}$ represents the projection error.

Therefore,
\begin{align*}
    & S_{t, w, \gamma}(\theta_{t+1}) - S_{t, w, \gamma}(\theta_t) \\
    & \leq -(\frac{\eta}{2} - \frac{\eta^2\beta}{4}) \| \nabla S_{t, w, \gamma}(\theta_t) \|^2 - (\frac{\eta}{2} - \frac{\eta^2\beta}{4}) \langle \nabla S_{t, w, \gamma}(\theta_t),  \Tilde{\nabla} S_{t, w, \gamma}(\theta_t) + \epsilon_{proj} - \nabla S_{t, w, \gamma}(\theta_t) \rangle \\
    & + \frac{\eta^2 \beta}{4} \|\Tilde{\nabla} S_{t, w, \gamma}(\theta_t) +\epsilon_{proj} -  \nabla S_{t, w, \gamma}(\theta_t) \|^2 \\
    & \leq -(\frac{\eta}{2} - \frac{\eta^2\beta}{4}) \| \nabla S_{t, w, \gamma}(\theta_t) \|^2 - (\frac{\eta}{2} - \frac{\eta^2\beta}{4}) \langle \nabla S_{t, w, \gamma}(\theta_t),  \Tilde{\nabla} S_{t, w, \gamma}(\theta_t)  - \nabla S_{t, w, \gamma}(\theta_t) \rangle \\
    & -  (\frac{\eta}{2} - \frac{\eta^2\beta}{4}) \langle \nabla S_{t, w, \gamma}(\theta_t), \epsilon_{proj} \rangle + \frac{\eta^2\beta}{2} \|\Tilde{\nabla} S_{t, w, \gamma}(\theta_t) -  \nabla S_{t, w, \gamma}(\theta_t) \|^2 + \frac{\eta^2\beta \epsilon^2}{2} \\
    & \leq -\frac{1}{2}(\frac{\eta}{2} - \frac{\eta^2\beta}{4}) \| \nabla S_{t, w, \gamma}(\theta_t) \|^2 - (\frac{\eta}{2} - \frac{\eta^2\beta}{4}) \langle \nabla S_{t, w, \gamma}(\theta_t),  \Tilde{\nabla} S_{t, w, \gamma}(\theta_t)  - \nabla S_{t, w, \gamma}(\theta_t) \rangle \\
    & + \frac{1}{2}(\frac{\eta}{2} - \frac{\eta^2\beta}{4}) \epsilon^2 + \frac{\eta^2\beta}{2} \|\Tilde{\nabla} S_{t, w, \gamma}(\theta_t) -  \nabla S_{t, w, \gamma}(\theta_t) \|^2 + \frac{\eta^2\beta \epsilon^2}{2}.
\end{align*}

By applying the conditional expectation $\mathbb{E}[\cdot | \theta_t]$ to both sides of the inequality, we obtain
\begin{align*}
    & (\frac{\eta}{4} - \frac{\eta^2\beta}{8}) \| \nabla S_{t, w, \gamma}(\theta_t) \|^2 \\
    & \leq \mathbb{E}[S_{t, w, \gamma}(\theta_{t}) - S_{t, w, \gamma}(\theta_{t+1}) ] + \eta^2 \frac{\beta}{2}\frac{\sigma^2 (1-\gamma^{2w})}{W^2(1-\gamma^2)} + (\frac{\eta}{4} - \frac{\eta^2\beta}{8} + \frac{\eta^2\beta}{2})\epsilon^2 \\
    & = S_{t, w, \gamma}(\theta_{t}) - S_{t+1, w, \gamma}(\theta_{t+1}) +  S_{t+1, w, \gamma}(\theta_{t+1}) - S_{t, w, \gamma}(\theta_{t+1}) + \eta^2 \frac{\beta}{4}\frac{\sigma^2 (1-\gamma^{2w})}{W^2(1-\gamma^2)} \\
    & + (\frac{\eta}{4} - \frac{\eta^2\beta}{8} + \frac{\eta^2\beta}{2})\epsilon^2   \\
    & = S_{t, w, \gamma}(\theta_{t}) - S_{t+1, w, \gamma}(\theta_{t+1}) +  S_{t+1, w, \gamma}(\theta_{t+1}) - S_{t, w, \gamma}(\theta_{t+1}) + \eta^2 \frac{\beta}{4}\frac{\sigma^2 (1-\gamma^{2w})}{W^2(1-\gamma^2)} \\
    & + (\frac{\eta}{4} + \frac{3\eta^2\beta}{8} )\epsilon^2 . 
\end{align*}
Rearranging the left and right side terms gives the inequality in the first Lemma.

Next, we derive the upper bounds for $S_{t+1, w, \gamma}(\theta_{t+1}) - S_{t, w, \gamma}(\theta_{t+1}) $ and $ S_{t, w, \gamma}(\theta_{t}) - S_{t+1, w, \gamma}(\theta_{t+1}) $. Recall that each loss function $f_t$ is upper bounded by $M$, i.e., $|f_t(x)| \leq M$. Then

\begin{align*}
    S_{t+1, w, \gamma}(\theta_{t+1}) - S_{t, w, \gamma}(\theta_{t+1}) & = \frac{1}{W}\sum_{j=0}^{w-1} \gamma_j (f_{t+1-j}(\theta_{t+1-j}) - f_{t-j}(\theta_{t+1-j}) ) \\
    & = \frac{1}{W}[f_{t+1}(\theta_{t+1}) - f_t(\theta_{t+1}) + \gamma f_t(\theta_t) - \gamma f_{t-1}(\theta_t) + \cdots \\
    & + \gamma^{w-1}f_{t-w+2}(\theta_{t-w+2}) - \gamma^{w-1}f_{t-w+1}(\theta_{t-w+2}) ] \\
    & \leq \frac{M(1+\gamma^{w-1})}{W} + \frac{M(1-\gamma^{w-1})(1+ \gamma)}{W(1-\gamma)}
\end{align*}

\begin{align*}
    S_{t, w, \gamma}(\theta_{t}) - S_{t+1, w, \gamma}(\theta_{t+1}) & = \frac{1}{W}\sum_{j=0}^{w-1} \gamma^j (f_{t-j}(\theta_{t-j}) - f_{t+1-j}(\theta_{t+1-j}) ) \\
    & \leq \frac{2M(1-\gamma^w)}{W(1-\gamma)}
\end{align*}
This completes the proof of Lemma~2 and 3.
\end{proof}

We now proceed with the proof of the main theorem based on the established inequalities:
\begin{proof}
Using the inequalities above, we derive an upper bound on $\|  \nabla S_{t, w, \gamma}(\theta_t) \|^2 $ as
\begin{align*}
    & \|  \nabla S_{t, w, \gamma}(\theta_t) \|^2 \\
    & \leq \frac{  \frac{2M(1-\gamma^w)}{W(1-\gamma)} + \frac{M(1+\gamma^{w-1})}{W} + \frac{M(1-\gamma^{w-1})(1+ \gamma)}{W(1-\gamma)} + \eta^2 \frac{\beta}{4}\frac{\sigma^2 (1-\gamma^{2w})}{W^2(1-\gamma^2)} + (\frac{\eta}{4} - \frac{\eta^2\beta}{8} + \frac{\eta^2\beta}{2})\epsilon^2 }{(\frac{\eta}{4} - \frac{\eta^2\beta}{8}) }.
\end{align*}

Substituting $\eta = \frac{1}{\beta}$ yields
\begin{align*}
    & \|  \nabla S_{t, w, \gamma}(\theta_t) \|^2 \\
    & \leq \frac{8\beta M}{W} (\frac{2(1-\gamma^w)}{1-\gamma} + (1+\gamma^{w-1}) + \frac{(1-\gamma^{w-1})(1+ \gamma)}{1-\gamma} ) + \frac{2\sigma^2 (1-\gamma^{2w})}{W^2(1-\gamma^2)} + \frac{5}{8}\epsilon^2 \\
    & \leq \frac{8\beta M}{W} (\frac{2(1-\gamma^w)}{1-\gamma} + (1+\gamma^{w-1}) + \frac{(1-\gamma^{w})(1+ \gamma)}{1-\gamma} ) + \frac{2\sigma^2 (1-\gamma^{2w})}{W^2(1-\gamma^2)} + \frac{5}{8}\epsilon^2 \\
    & = \frac{8\beta M}{W} (\frac{(1-\gamma^{w})(3+ \gamma)}{1-\gamma} + (1+\gamma^{w-1}) ) + \frac{2\sigma^2 (1-\gamma^{2w})}{W^2(1-\gamma^2)} + \frac{5}{8}\epsilon^2 \\
    & \leq \frac{8\beta M}{W}(4\frac{(1-\gamma^{w})}{1-\gamma} + \frac{1+\gamma^{w-1}}{1-\gamma} ) + \frac{2\sigma^2 (1-\gamma^{2w})}{W^2(1-\gamma^2)} + \frac{5}{8}\epsilon^2 \\
    & \leq \frac{32\beta M}{W} (\frac{2-\gamma^w + \gamma^{w-1}}{1-\gamma} ) + \frac{2\sigma^2 (1-\gamma^{2w})}{W^2(1-\gamma^2)} + \frac{5}{8}\epsilon^2.
\end{align*}
 When $ \gamma \to 1^{-}$,
\begin{align*}
    \lim_{\gamma \to 1^{-}} \|  \nabla S_{t, w, \gamma}(\theta_t) \|^2 \leq \frac{1}{W}(64\beta M +2\sigma^2) + \frac{5}{8}\epsilon^2.
\end{align*}
Telescoping $t$ from $1$ to $T$, we obtain
\begin{align*}
    \lim_{\gamma \to 1^{-}} \sum_{t=1}^T  \| \nabla S_{t, w, \gamma}(\theta_t) \|^2 \leq \frac{T}{W} (64\beta M + 2\sigma^2) + \frac{5}{8}\epsilon^2T
\end{align*}
and
\begin{align*}
    \lim_{\gamma \to 1^{-}} \frac{1}{T} \sum_{t=1}^T  \| \nabla S_{t, w, \gamma}(\theta_t) \|^2 \leq \frac{1}{W} (64\beta M + 2\sigma^2) + \frac{5}{8}\epsilon^2
\end{align*}
This concludes the proof of the Theorem.
\end{proof}

\section{Experiment implementation details}
In this section, we provide details of the experimental settings leading to the results presented in the main paper. As one of the benchmarking algorithms, a single-layer LSTM model is used with a feature embedding dimension $128$ and hidden size $256$. In the TimesNet model, the number of layers is set to $2$, the number of kernels equal to $6$ and the feed-forward dimension equal to $100$. For PatchTST, the patch size is $10$ with equal stride; the number of transform layers is equal to $3$, with model dimension equal to $256$ and $8$ heads. The feed-forward dimension for the PatchTST is equal to $512$. Each client performs local supervised training for $100$ epochs with a batch size of $50$, using the Adam optimizer with a learning rate of $ 0.001$. A total of $10$ communication rounds are conducted, with model aggregation performed at the server.

Regarding the implementation of Fed-REACT algorithm, the encoder uses causal time dilated CNNs consisting of $10$ 1S convolutional blocks, with dilation increasing by a factor of $2$ in each layer. Each block uses leaky ReLU activation (negative slope $0.01$), followed by a linear layer that outputs features of size $320$. The encoder is trained using contrastive loss as outlined in \cite{franceschi2019unsupervised}. The task model is an SVM classifier that predicts one out of ten classes based on the encoded features. Each client performs $500$ training steps per communication round, with a batch size of $10$, using the Adam optimizer with learning rate $0.001$.

To create heterogeneous clusters in Section 4.1 of the main text and section 7 of the supplementary text, we use Dirichlet sampling to distribution examples for each label among the three clusters. For the 10 and 50 client settings in Section 4.1 of the main text, we set the parameter of the Dirichlet sampling, $\alpha$, to $0.1$. For the 10 client setting, this yields the distribution presented in Fig. \ref{fig:sample_dist}. For the $100$ client setting in Section 7, $\alpha$ is set to $2.5$.

\begin{figure}[!htbp]
    \centering
    \includegraphics[width=0.65\linewidth]{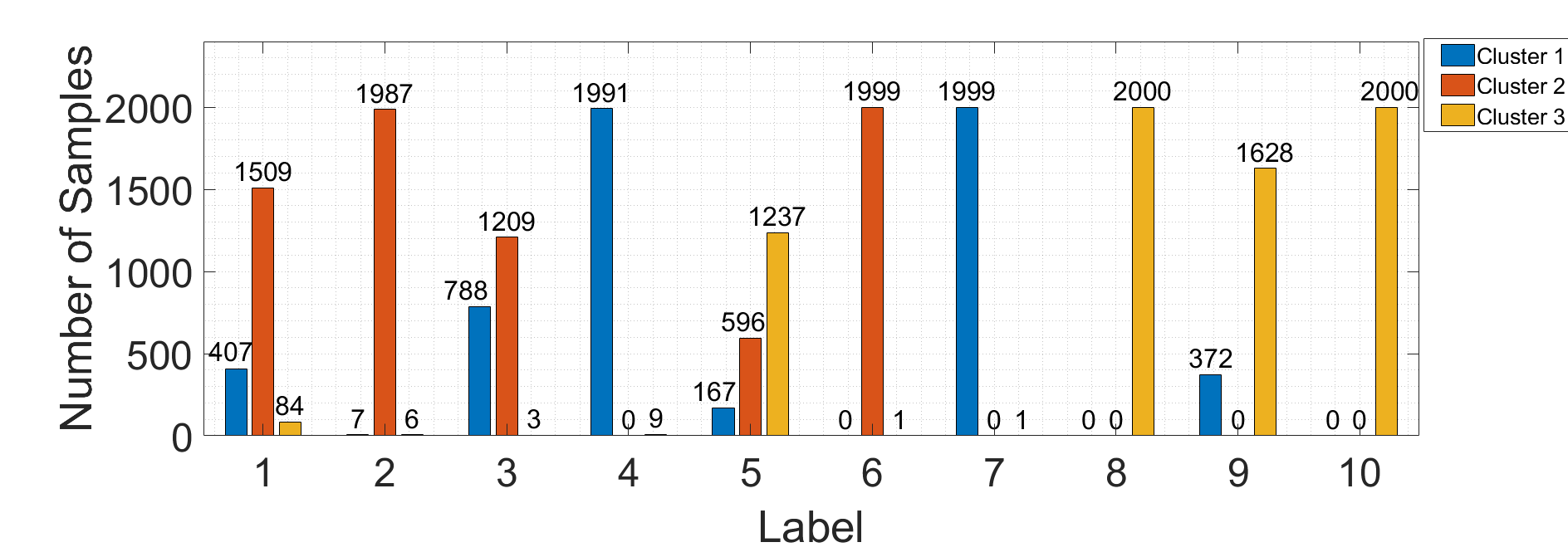} % Replace "example-image" with your file name
    \caption{Label distribution for three clusters generated using Dirichlet distribution with $\alpha=0.1$. Cluster 1 is primarily composed of digits 3 and 6, Cluster 2 contains digits 0, 1, 2, and 5, while Cluster 3 consists of digits 4, 7, 8, and 9.}
    \label{fig:sample_dist}
\end{figure}

\section{Estimation of the number of clusters}

In the main text of the paper, we have assumed that we know the number of clusters apriori. 
In this section, we explore two strategies for estimating the number of clusters that can be used: namely the elbow method and the Sillhouette score.

\begin{figure*}[!htbp]
\centering
  \begin{minipage}[t]{.45\linewidth}
    \includegraphics[width=\linewidth]{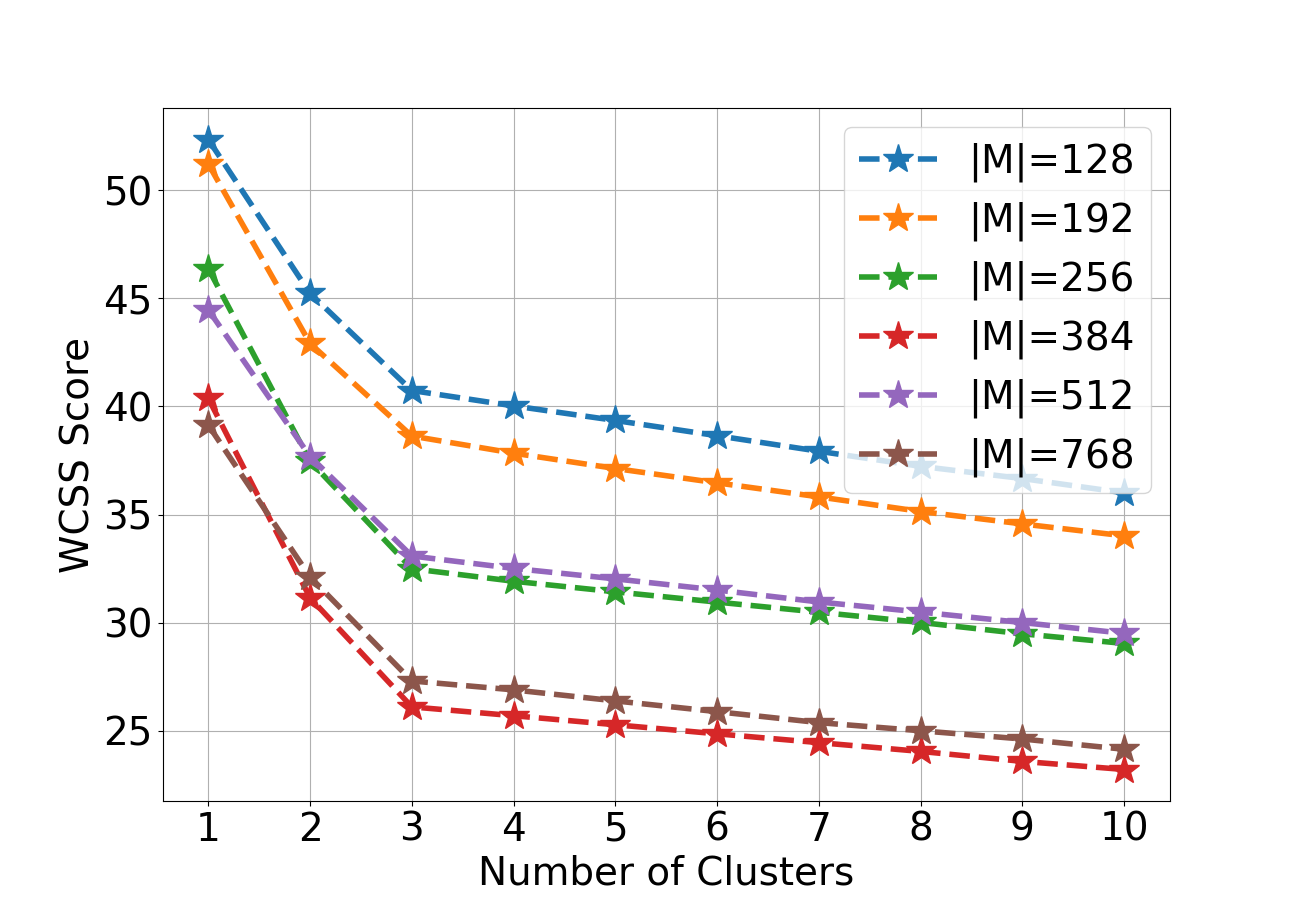}%
    \caption{WCSS scores against the number of clusters for various sizes of the local datasets $|\mathcal{M}_t^k|$}
    \label{fig:clusternumest1}
  \end{minipage}\hfil
  \begin{minipage}[t]{.45\linewidth}
    \includegraphics[width=\linewidth]{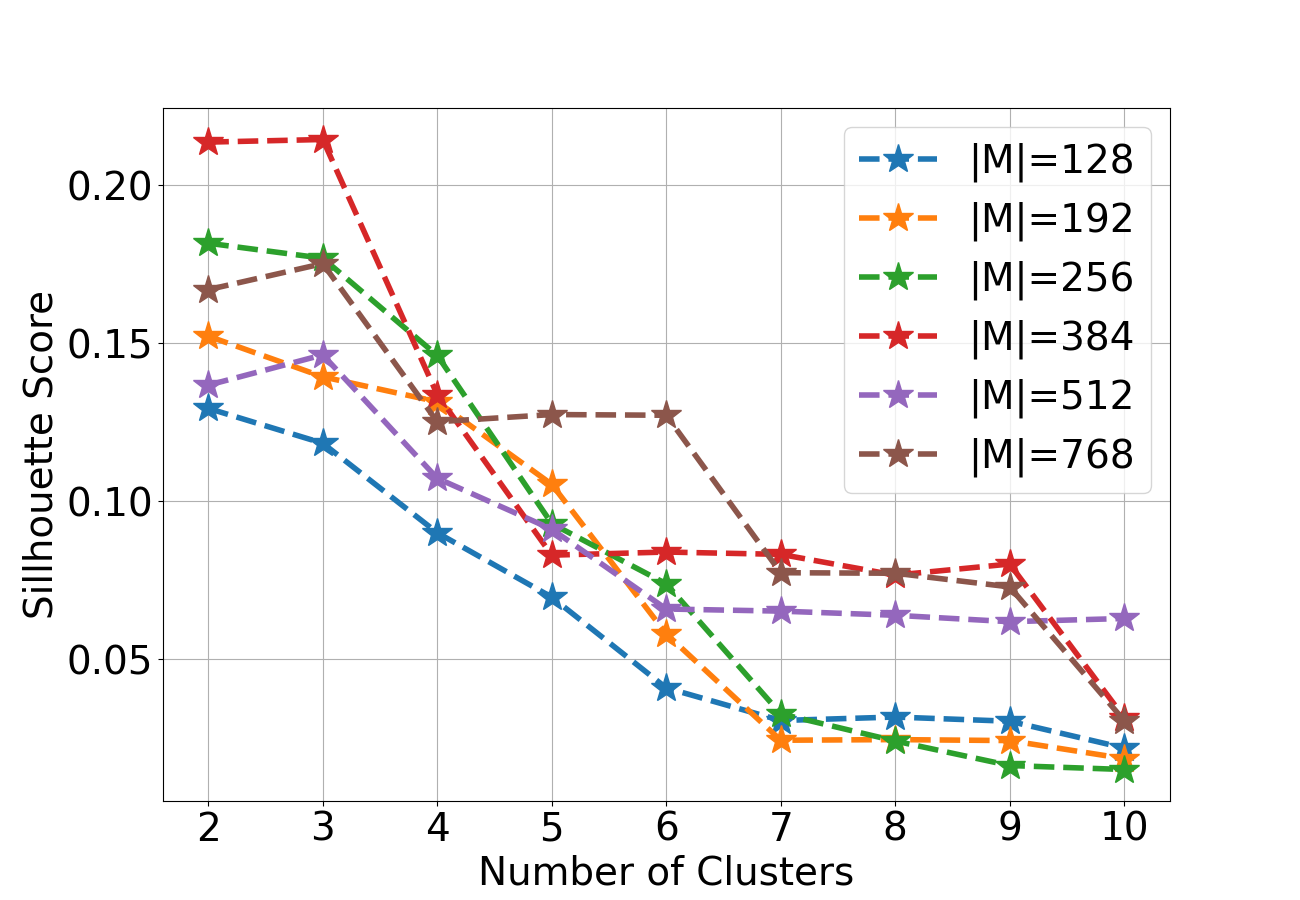}%
    \caption{Sillhouette scores against the number of clusters for various sizes of the local datasets $|\mathcal{M}_t^k|$}
    \label{fig:clusternumest2}
  \end{minipage}
\end{figure*}

% \begin{figure}[!htbp]
%     \centering
%     \includegraphics[width=0.65\linewidth]{cluster_num_est_1.png} % Replace "example-image" with your file name
%     \caption{WCSS scores against the number of clusters for various sizes of the local datasets $|\mathcal{M}_t^k|$}
%     \label{fig:clusternumest1}
% \end{figure}

% \begin{figure}[!htbp]
%     \centering
%     \includegraphics[width=0.65\linewidth]{cluster_num_est_2.png} % Replace "example-image" with your file name
%     \caption{Sillhouette scores against the number of clusters for various sizes of the local datasets $|\mathcal{M}_t^k|$}
%     \label{fig:clusternumest2}
% \end{figure}

\begin{enumerate}
\item \textbf{Elbow Method}: In this method, we compute the within-cluster sum of squares (WCSS) for clustering solutions with varying numbers of clusters. WCSS measures cluster compactness by summing the squared distances between each client and its assigned cluster center. Typically, WCSS decreases as the number of clusters grows. We select the optimal cluster count using the "elbow" method, identifying the point at which adding more clusters no longer significantly reduces the WCSS.    

\item \textbf{Silhouette Score}: The Silhouette score, ranging between $-1$ and $1$, measures how closely each client matches its own cluster compared to neighboring clusters. We select the optimal number of clusters as the one that maximizes the Silhouette score.
\end{enumerate}

Results are presented in Figures~\ref{fig:clusternumest1} and~\ref{fig:clusternumest2} for the default scenario of 100 clients partitioned into 3 clusters (with $\alpha=0.1$) under varying local dataset sizes $|\mathcal{M}_t^k|$. As evident from the plots, the elbow method consistently identifies the correct cluster count (3 clusters). However, the Silhouette score correctly identifies the optimal cluster number only when local datasets are sufficiently large.

\section{Fed-REACT with
intermittent client participation}

\begin{figure*}[!htbp]
\centering
  \begin{minipage}[t]{.45\linewidth}
    \includegraphics[width=\linewidth]{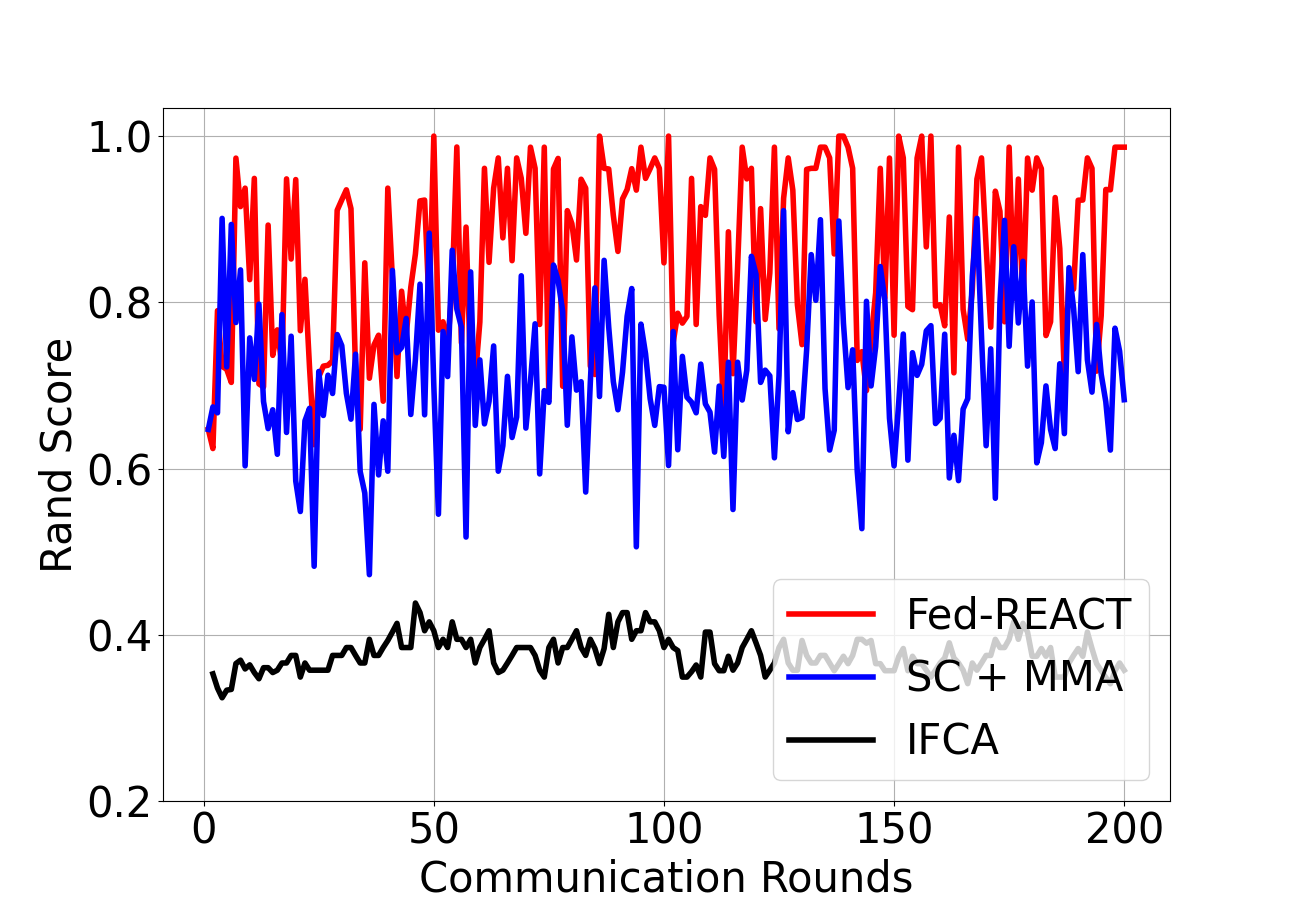}%
    \caption{\small Rand score against the ground truth 
 for different method on the RTD dataset for intermittent client participation (Client Participation Ratio: 0.33)}%
    \label{fig:Async}
  \end{minipage}\hfil
  \begin{minipage}[t]{.45\linewidth}
    \includegraphics[width=\linewidth]{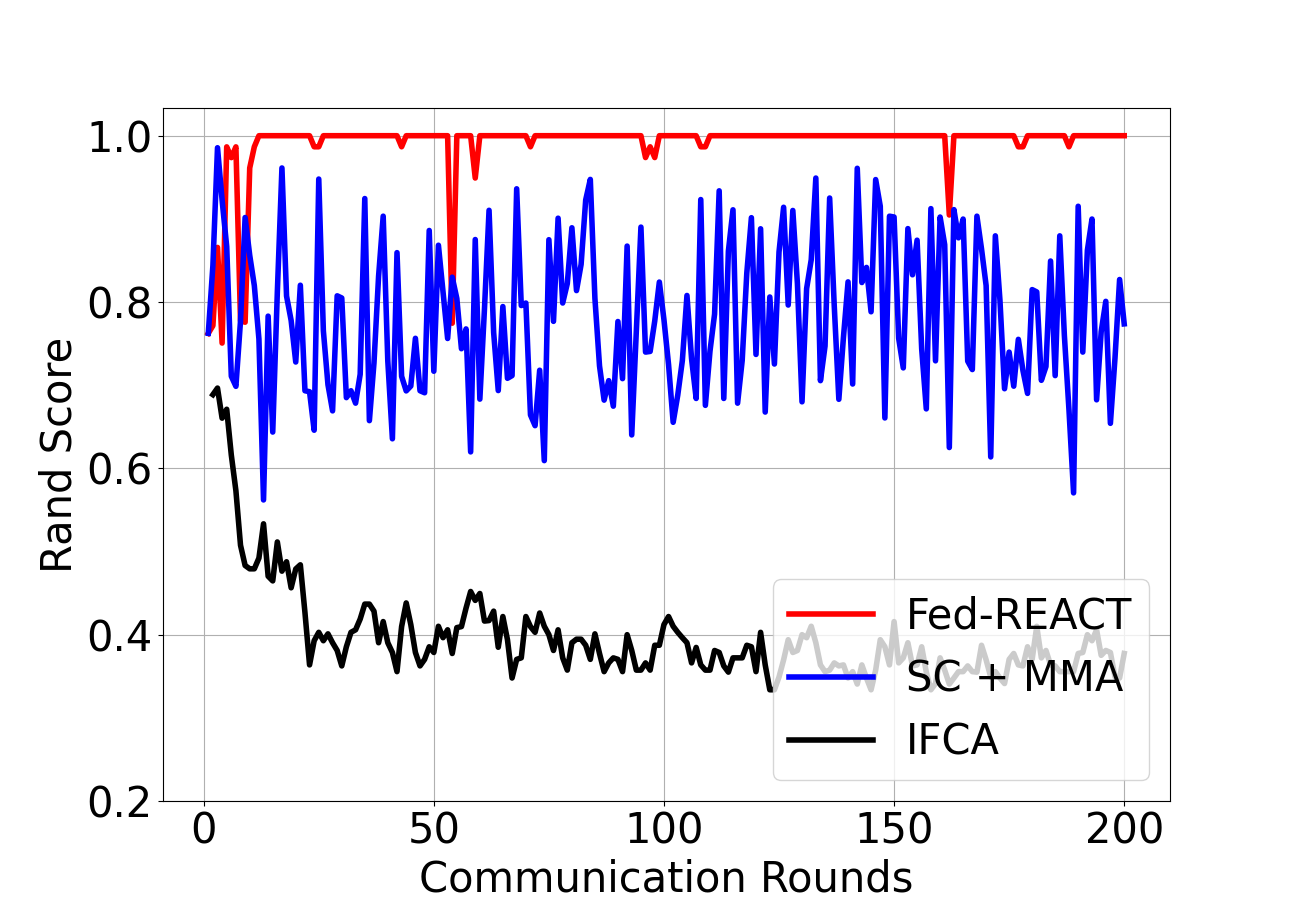}%
    \caption{\small Rand score against the ground truth 
 for different method on the RTD dataset for intermittent client participation (Client Participation Ratio: 0.50)}
    \label{fig:Async0pt5}
  \end{minipage}\hfill
\end{figure*}

Fed-REACT can be extended to settings with partial client participation through minor modifications. To compute the similarity matrix under partial client participation, we use the most recently saved output layer parameters for clients that do not participate in a given round. Only the participating clients are used to update the cluster-specific task models. We evaluate this asynchronous setting on the RTD dataset using a 100-client setup with stationary clusters, as described in Section 7 of the supplementary material with the exception that we set the Dirichlet sampling parameter to $\alpha=1.75$. For each of the three clusters, we randomly choose one-third of the clients at a given round. We also explore the case when the client participation is $50 \%$.  The rand scores for the clustering solutions are plotted in Fig. \ref{fig:Async} and \ref{fig:Async0pt5} with the corresponding accuracies in Table. \ref{tab:async}, showing Fed-REACT outperforming clustering baseline schemes in the scenario with intermittent client participation.

\begin{table*}[!htbp]
  \centering
  \small
  \resizebox{0.75\linewidth}{!}{
  \begin{tabular}{c|c|c|c|c}\toprule
    Client Participation Ratio & Fed - REACT w/ A1 & Fed - REACT w/ A2 & IFCA & FLSC  \\ \midrule
      0.33 & \textbf{0.761} & 0.753 & 0.719 & 0.725 \\ \midrule
      0.50 & \textbf{0.768} & 0.756 & 0.721 & 0.730 \\ \bottomrule
  \end{tabular}
  }
  \caption{Results for Fed-REACT vs. baselines for the setting where the fraction of participating clients is 0.33 and 0.5}
  \label{tab:async}
\end{table*}

\section{Additional experiments on the RTD dataset}

\subsection{Evolutionary clustering on stationary distribution}

In the next set of experiments, we evaluate impact of the clustering method utilized in the second phase of the Fed-REACT algorithm. The considered baseline clustered FL methods include IFCA (\cite{ghosh2020efficient}), 
%where the cluster membership is based on the similarity coefficients,  
FL with Soft Clustering (FLSC) (\cite{li2021federated}) and FLACC (\cite{mehta2023greedy}). % {\color{orange} We test the accuracy of Fed-REACT as these methods are used to help facilitate updates of local SVM models. - Please explain this sentence as Haris mentioned in the email. } {\color{purple}{(Answer, Usman) We evaluate the accuracy of Fed-REACT and the baseline algorithms by evaluating the cluster models at round $t=T_{task}$ at the local datasets of the clients of those clusters. The accuracy is then averaged across clients.}}

% To tease apart the impact of clustering upon the model's performance from the impact of model aggregation strategy, we also consider the scenario where the cluster-specific task models are aggregated in a memoriless fashion, i.e., with no regard for the past values of the task model parameters; meanwhile, the clients are grouped together by relying on either snapshot or evolutionary clustering strategy. Depending on whether snapshot or evolutionary clustering is used to group the clients, we refer to these schemes as Snapshot Clustering + Memoriless Model Aggregation (SC+MMA) and Evolutionary Clustering + Memoriless Model Aggregation (EA+MMA), respectively. Performance of different methods is evaluated and compared using two metrics: the Rand score and accuracy. The Rand score characterizes the quality of a clustering solutions by comparing it to the cluster assignment groundtruth. Specifically, the Rand score is computed as $\frac{TP+TN}{TOT}$, where $TP$ denotes the number of pairs of clients correctly placed in the same cluster by an algorithm, $TN$ is the number of pairs of clients correctly placed in different clusters, and $TOT$ denotes the total number of possible pairs of clients. Unlike SUMO and EEG dataset which are inherently non-stationary, we need to create clusters for the RTD dataset that are not only heterogeneous in-terms of label distribution, but the distribution is non-stationary across rounds. We therefore bifurcate our evaluation of the RTD dataset into two settings: Stationary Setting and Non-Stationary Setting

% \textbf{Stationary Setting.} 

To generate clusters for the stationary setting, we partition the RTD dataset using Dirichlet Sampling with $\alpha=2.5$ for the $10$ client setting, and $\alpha =1.5$ for the $100$ client setting. The client datasets are then uniformly sampled from their respective clusters. At time $t$, client $k$ trains the task models using $|\mathcal{M}^k_t|=64$ labeled samples, emulating the setting where the number of labeled samples is rather limited; a total of $60$ communication rounds is conducted for the $10$ clients setting and $200$ for the $100$ clients setting. Figure \ref{fig:2rand} and \ref{fig:3rand} show the progression of the Rand score through the communication rounds for $10$ and $100$ clients, respectively. In the latter case, Clusters 1, 2 and 3 contain $33$, $33$ and $34$ clients, respectively. Figure \ref{fig:2rand} demonstrates that Fed-REACT's evolutionary clustering technique correctly groups the clients in as few as 3 communication rounds, while the snapshot clustering methods struggle to discover the ground truth. 
%\sout{The Rand Score of IFCA is a constant $0.2667$ and is omitted from the figure. } {\color{orange}This is not consistent with Figure 2. Why is it a constant = 0.6? }  {\color{purple}{(Answer Usman: I have corrected this figure. Do we keep IFCA results in the plot?)}}
Even when the number of clients in the system increases to $100$, the observed Rand score of Fed-REACT's evolutionary clustering method rapidly identifies true clusters and steadily maintains the correct solution, while the competing methods suffer from oscillations in the cluster membership and generally fail to approach the ground truth.
% \begin{figure}[t!]
%    \centering
%    % First subfigure
%    \begin{minipage}{\linewidth}
%       \centering
%        \includegraphics[width=\linewidth]{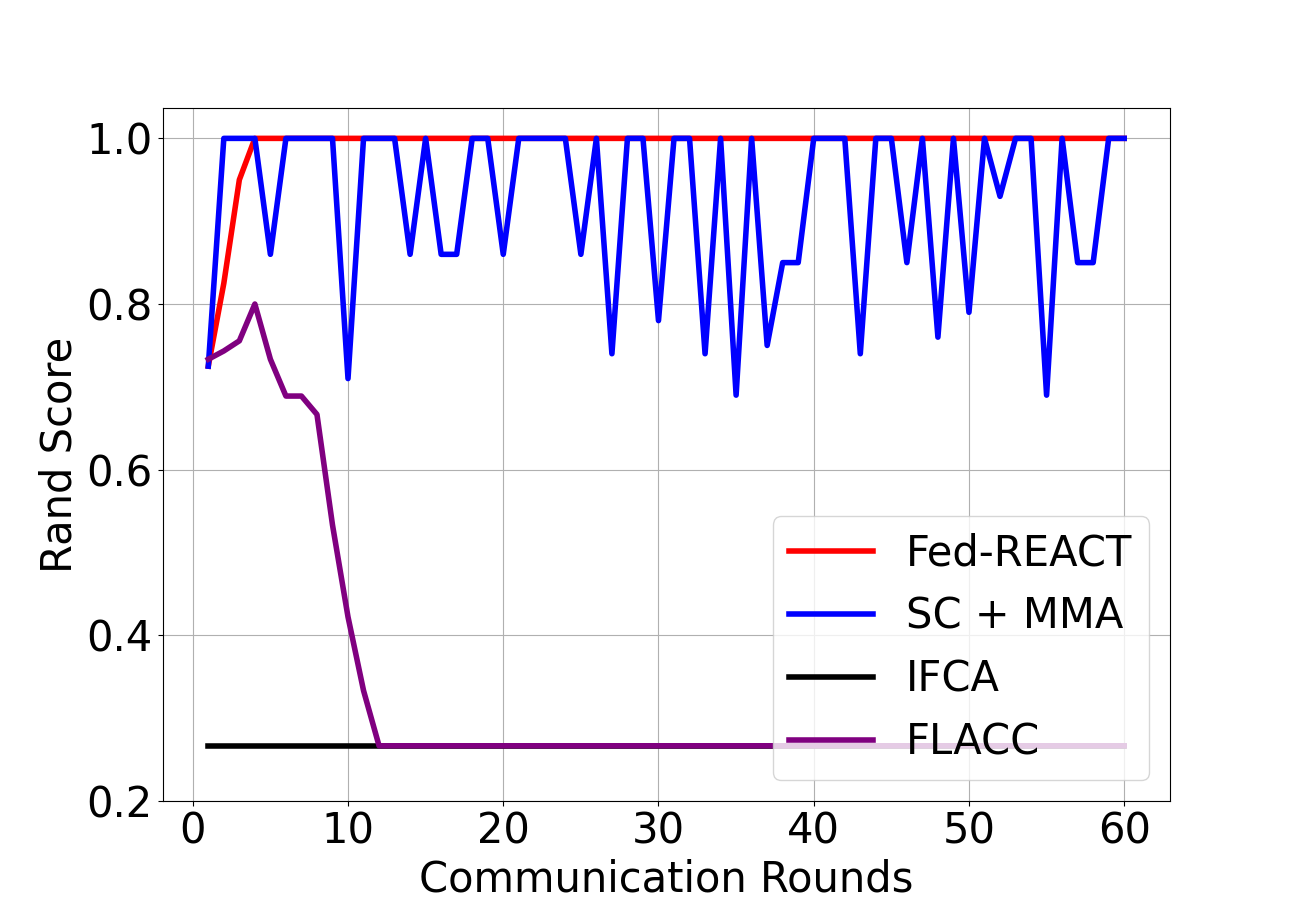 }
%        \caption{\sout{Rand Index Score vs. the ground truth for Fed-REACT (our method) and the baseline clustering methods (the system has 10 clients)} \textcolor{purple}{(Proposed Revision Usman:) Rand Index Score vs. the ground truth for Fed-REACT (our method) and the baseline clustering methods for a system with $10$ clients,  $|\mathcal{M}^k_t|$=64, and $T_{task}=60$ on the RTD dataset } {\color{orange}What does ground truth mean here? I only see rand score value for each communication rounds. } {\color{purple} {(Usman, Answer: ) The true clustering labels i.e [0 0 0, 1, 1, 1 ,2,2,2,2] which means clients 1,2,3 belong to cluster 0, 4,5,6 belong to cluster 1 and 7,8,9,10 belong to cluster 2}}}
%        \label{fig:21}
%    \end{minipage}
%     \hfill
    % Second subfigure
%    \begin{minipage}{\linewidth}
%        \centering
%            \includegraphics[width=\linewidth]{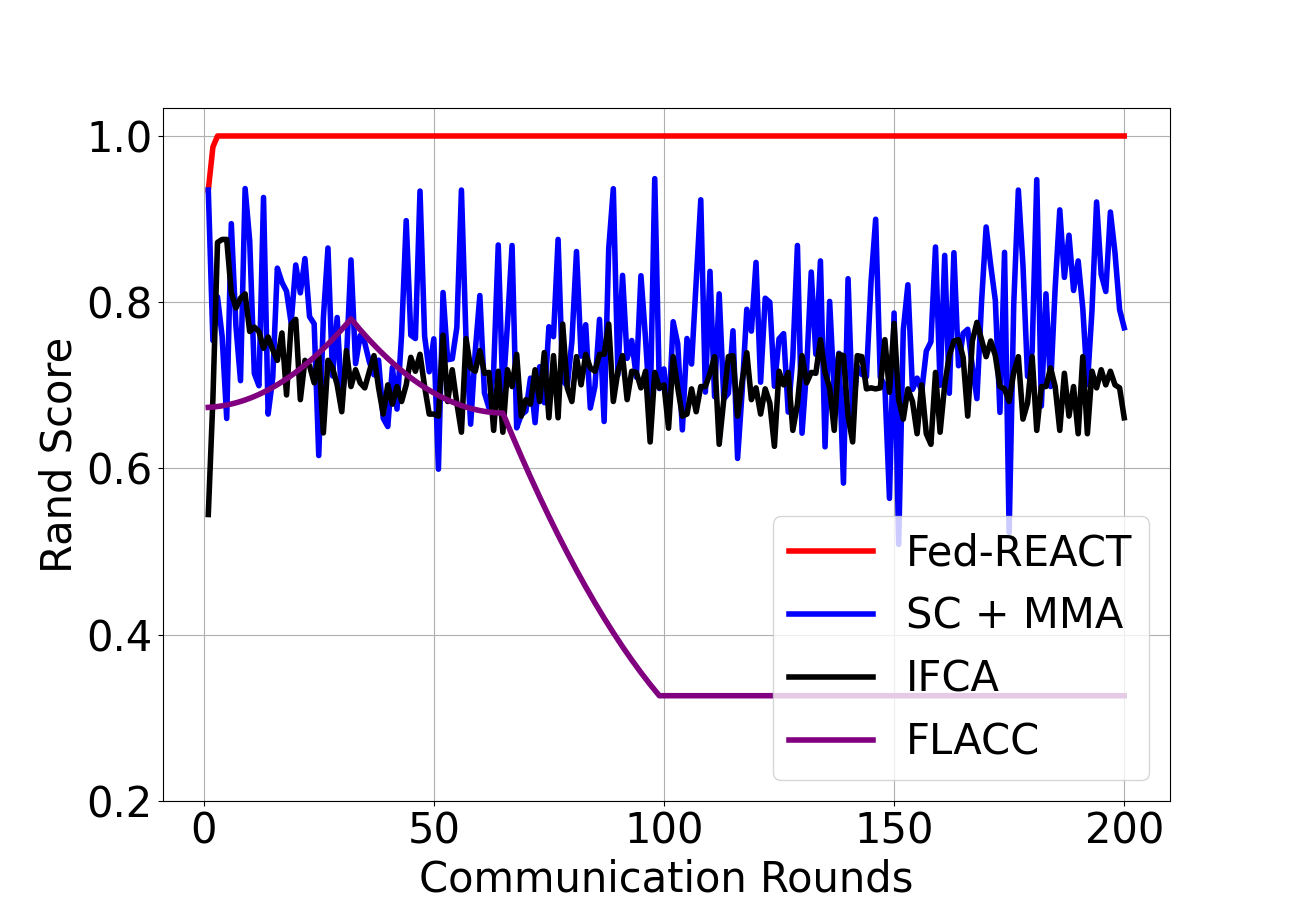 }
%            \caption{\sout{Rand Index Score vs. the ground truth 
%         for Fed-REACT (our method) and  the baseline clustering methods (the system has 100 clients).} \textcolor{purple}{(Proposed Revision Usman:) Rand Index Score vs. the ground truth for Fed-REACT (our method) and the baseline clustering methods for a system with $100$ clients,  $|\mathcal{M}^k_t|$=64, and $T_{task}=200$ on the RTD dataset} } 
%            \label{fig:31}
%    \end{minipage}
%    \hfill
%\end{figure}
Accuracies for different methods in a system with $10$ and $100$ clients are reported in Table~ \ref{tab:cluster_accstaionary}. Specifically, for each algorithm we calculate the instantaneous accuracy averaged over all communication rounds. The results show that by including historical information, evolutionary clustering methods are capable of discovering the true structure and memberships of clusters, and generally lead to task models that achieve higher accuracy than the schemes ignoring past information. The most accurate performance is achieved by Fed-REACT that relies on approach A2 for task model aggregation.

\begin{figure*}[!htbp]
\centering
  \begin{minipage}[t]{.45\linewidth}
    \includegraphics[width=\linewidth]{figure_2_corrected.png }%
    \caption{\small Rand score against the ground truth 
 for different method on the RTD dataset for stationary cluster distributions for a setup with 10 clients}%
    \label{fig:2rand}
  \end{minipage}\hfil
  \begin{minipage}[t]{.45\linewidth}
    \includegraphics[width=\linewidth]{figure_3_corrected_flacc.png}%
    \caption{\small \small Rand score against the ground truth 
 for different method on the RTD dataset for stationary cluster distributions for a setup with 100 clients}%
    \label{fig:3rand}
  \end{minipage}\hfill
\end{figure*}

\begin{table*}[t]
  \centering
  \small
  \resizebox{0.95\linewidth}{!}{
  \begin{tabular}{c|c|c|c|c|c|c|c}\toprule
    Dataset & Fed - REACT w/ A1 & Fed - REACT w/ A2 &  SC + MMA & EC + MMA & IFCA & FLSC &  FLACC \\ \midrule
    RTD - 10 clients (Stationary) & 0.909 & \textbf{0.928} & 0.763 & 0.859 & 0.774 & 0.830 & 0.755  \\ \midrule
    RTD - 100 clients (Stationary) & 0.750 & \textbf{0.751} & 0.716 & 0.737 & 0.739 & 0.740 &0.729  \\ \midrule
    RTD - 100 clients (Non-Stationary: Strategy 3) & 0.857 &\textbf{0.861} & 0.760 & 0.850 & 0.798 & 0.582 & 0.428 \\ \bottomrule
  \end{tabular}
  }
  \caption{The test accuracy computed after $T_{task}$ rounds of Fed-REACT vs. baselines on RTD dataset for the Stationary Setting with 10 and 100 clients. The accuracy is computed by averaging cluster-specific model accuracies defined as $\frac{1}{K}\sum_{C_i}{\sum_{k \in C_i}}\mathrm{Acc}_{C_i}(\mathcal{D}_{k, test}) $, where $K$ is the number of clients and $\mathrm{Acc}_{C_i}(\mathcal{D}_{k, test})$ denotes the accuracy of the model for cluster $C_i$ tested on the dataset that belongs to client $k \in C_i$.}
  \label{tab:cluster_accstaionary}
\end{table*}

\subsection{Evolutionary clustering for non-stationary distribution with client migration}

In strategy 2 discussed in the main text, clients randomly (with a small probability) move to a different cluster temporarily, and return to their native clusters with a high probability. In this section, we explore a more challenging setting, i.e., \textit{strategy 3}, in which clients \textit{migrate} to a different cluster with a small probability (p=0.005 for our experiments). The cluster distributions, however, are generated in the same fashion as strategy 2, and other experimental settings are kept same as strategy 2 as well. We plot the rand scores in Fig. \ref{fig:migrate} and record the accuracies in Table \ref{tab:cluster_accstaionary}. As the results presented show, Fed-REACT is able to identify the clustering solution correctly in this more challenging setting, which translates to an advantage in terms of accuracy over the baseline schemes. While IFCA is eventually able to identify the correct clustering solution, the random initialization of the cluster models required by the algorithm results in inferior performance in terms of accuracy. 

\begin{figure}[htbp]
    \centering
    \includegraphics[width=0.45\linewidth]{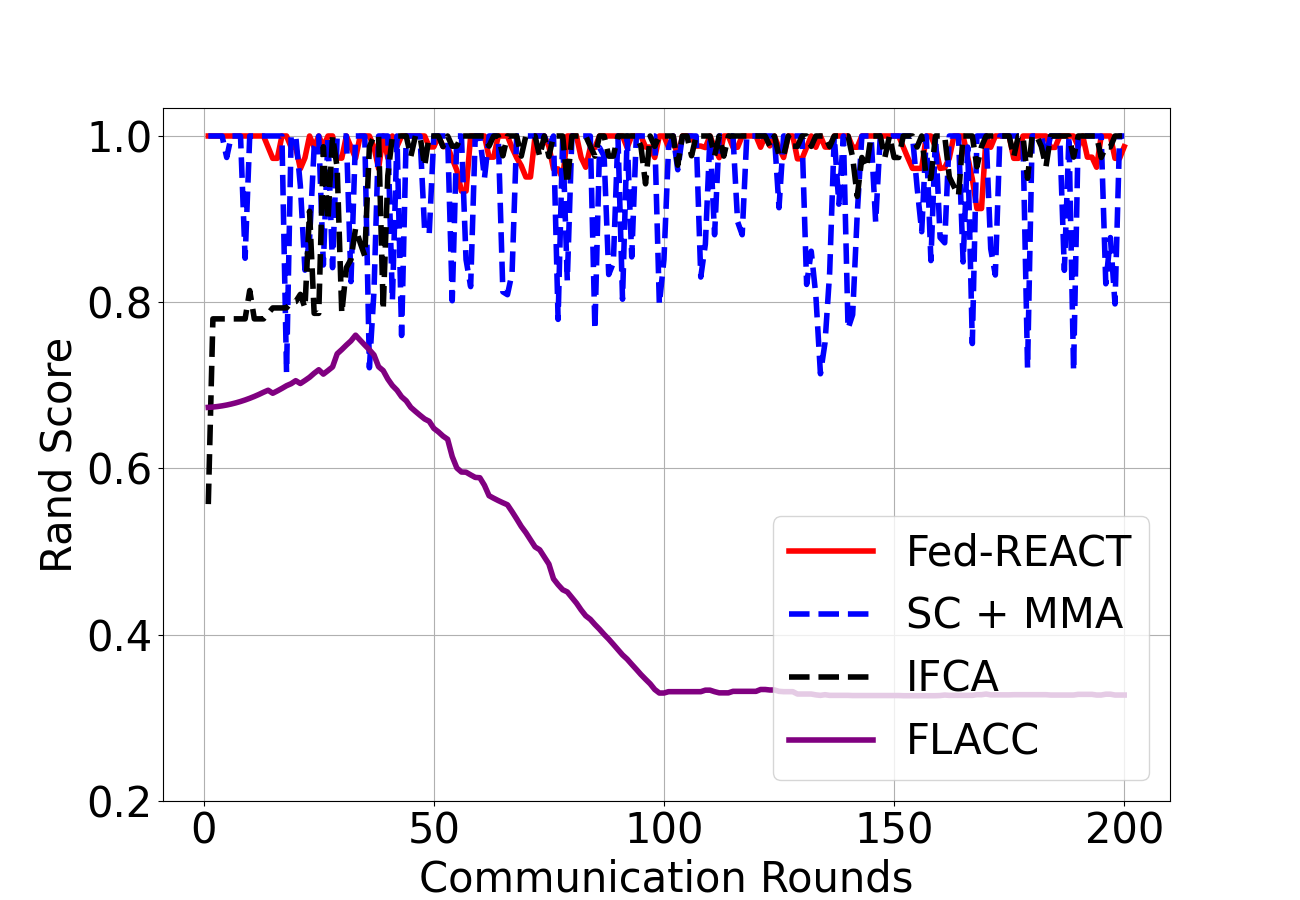 }
    \caption{Rand score against the ground truth 
 for different method on the RTD dataset with client migration setting}
    \label{fig:migrate}
\end{figure}

\section{Experiments for the SUMO dataset in a clustered setting}

\begin{table}
\centering
  \resizebox{\linewidth}{!}{%
  \begin{tabular}{c|cccccc|c|c|c|c}\toprule
    \multicolumn{1}{c}{ }  & \multicolumn{6}{c}{Fed-REACT} & \multicolumn{1}{c}{ }  & \multicolumn{1}{c}{ } &  \multicolumn{1}{c}{ } & \multicolumn{1}{c}{ } \\ 
    Algorithm &  C = 1 & C = 3 & C = 9 & C = 25 & C = 40 & C = 50 & LSTM (C=1; Ditto) & Patch-TST (C=1; APFL) & Times-Net (C=1; FedProx) & Causal CNN (C=1; FedProx) \\ \midrule
    RMSE &  24.4 & 23.7 & 13.0 & 8.8 & 5.8 & \textbf{1.3} & 42.0 & 20.1 & 34.3 & 38.2 \\    \bottomrule
  \end{tabular}
}
\caption{Performance on SUMO EV dataset: Fed-REACT with varied values of $C$ alongwith the best results for the supervised learning baselines from the main paper}\label{table:4}
\end{table}

Unlike the other datasets used in the paper, for SUMO we do not a priori know the number of clusters, $C$. This is why we test the performance of our method for various values of C, the total number of clusters, with $C=1$ denoting global averaging of the output layer and $C=50$ denoting complete personalization. The root mean-square error (RMSE) averaged across clients is presented in Table \ref{table:4}, contrasted against the best results from the non-clustered supervised learning baselines. As the data is highly heterogeneous, Fed-REACT outperforms the supervised learning baselines for $C\geq 9$. In fact, the clustering the output layer confers no advantage at all, and the optimal performance is achieved for a completely personalized setting $(C=50)$. However, even when we average the output layer across all the clients  ($C=1$), Fed-REACT still performs competitively against the supervised learning baselines, lagging behind only Patch-TST.

\section{Ablation Study}

\subsection{Ablation study over levels of heterogeneity}
We perform an ablation study on the RTD dataset for stationary cluster distribution, exploring the relationship between heterogeneity, controlled by parameter $\alpha$, and the achieved accuracy averaged across clients. To reiterate, smaller values of $\alpha$ induce greater level of heterogeneity across clusters. We consider a federated learning system with $100$ clients; the number of clients per cluster remains the same as in the previous experiments. The results, presented in Table \ref{table:5}, demonstrate the benefits of the evolutionary strategy that considers past cluster assignments and task model parameters when grouping the clients and aggregating cluster-specific task models.

% \begin{table}[htbp!]
% \centering
% {
% \begin{tabular}{c|c|c|c|c}% {P{2.0cm}|P{1.2cm}|P{1.2cm}|P{1.2cm}|P{1.2cm}|} 
% \toprule
% Method & $\alpha=0.10$ & $\alpha=0.25$ & $\alpha=0.50$ & $\alpha=2.0$ \\  \midrule
% Fed-REACT w/ A1 & 0.888 & \textbf{0.872} & \textbf{0.816} & \textbf{0.742} \\ \midrule
% Fed-REACT w/ A2 & \textbf{0.900} & 0.871 & 0.815 & 0.738 \\ \midrule
% SC + MMA & 0.887 & 0.868 & 0.809 & 0.712 \\ \midrule
% EC + MMA & 0.887 & 0.868 & 0.809 & 0.721 \\ \midrule
% IFCA & 0.889 & 0.872 & 0.711 & 0.730 \\ \midrule
% FLSC & 0.693 & 0.761 & 0.735 & 0.721 \\ \midrule
% FLACC & 0.579 & 0.620  & 0.629   & 0.635  \\ \midrule
% \end{tabular}
% }
% \caption{The test accuracy of clustered FL algorithms with varied values of Dirichlet distribution parameter $\alpha$; smaller $\alpha$ indicates higher level of heterogeneity.}
% \label{table:5}
% \end{table}

\begin{table*}[!thbp]
  \centering
  \small
  \resizebox{0.9\linewidth}{!}{%
  \begin{tabular}{c|c|c|c|c|c|c|c}
    \toprule
    $\alpha$ & Fed-REACT w/ A1 & Fed-REACT w/ A2 & SC+MMA & EC+MMA & IFCA & FLSC & FLACC \\ 
    \midrule
    0.10 & 0.888 & \textbf{0.900} & 0.887 & 0.887 & 0.889 & 0.693 & 0.579  \\ 
    \midrule
    0.25 & \textbf{0.872} & 0.871 & 0.868 & 0.868 & 0.872 & 0.761 & 0.620   \\ 
    \midrule
    0.50 & \textbf{0.816} & 0.815 & 0.809 & 0.809 & 0.711 & 0.735 & 0.629  \\ 
    \midrule
    2.00 & \textbf{0.742} & 0.738 & 0.712 & 0.721 & 0.730 & 0.721 & 0.635  \\ 
    \bottomrule
  \end{tabular}}
  \caption{The test accuracy of clustered FL algorithms with varied values of Dirichlet distribution parameter $\alpha$; smaller $\alpha$ indicates higher level of heterogeneity.}
  \label{table:5}
\end{table*}
% \begin{table*}[!thbp]
%   \centering
%   \small
%   \resizebox{0.9\linewidth}{!}{
%   \begin{tabular}{c|c|c|c|c|c|c|c}\toprule
%     \alpha & Fed - REACT w/ A1 & Fed - REACT w/ A2 &  SC + MMA & EC + MMA & IFCA & FLSC &  FLACC \\ \midrule
%     0.10 & 0.888 &\textbf{0.900} & 0.887 & 0.887 & 0.889 & 0.693 & 0.579  \\ \midrule
%     0.25  & \textbf{0.872} & 0.871 & 0.868 & 0.868 & 0.872 & 0.761 & 0.620   \\ \midrule
%     0.50 & \textbf{0.816} & 0.815 & 0.809 & 0.809 & 0.711 & 0.735 & 0.629  \\ \midrule
%     2.00  & \textbf{0.742} & 0.738 & 0.712 & 0.721 & 0.730 & 0.721 & 0.635  \\ \bottomrule
%   \end{tabular}}
%   \caption{The test accuracy of clustered FL algorithms with varied values of Dirichlet distribution parameter $\alpha$; smaller $\alpha$ indicates higher level of heterogeneity.}
%   \label{table:5}
% \end{table*}

\subsection{Ablation study over the number of clusters}
For the main experiments on the RTD dataset, we created three clusters. In this section, we perform additional experiments for different number of clusters into which we partition 100 clients. Throughout this section, we set the parameter $\alpha$ to $0.5$. Apart from this exception and the number of clusters, we keep the experimental setting the same as the stationary cluster setting studied above. We explore the following clustering configurations
\begin{itemize}
\item 2 clusters with 50 clients each
\item 4 clusters with 25 clients each
\item 5 clusters with 20 clients each
\item 6 clusters with five clusters having 16 clients each and the sixth cluster containing 20 clients.
\item 7 clusters with six clusters having 14 clients each and the seventh cluster containing 16 clients.

\end{itemize}

\begin{table*}[!thbp]
  \centering
  \small
  \resizebox{0.9\linewidth}{!}{
  \begin{tabular}{c|c|c|c|c|c|c|c}\toprule
    Number of Clusters & Fed - REACT w/ A1 & Fed - REACT w/ A2 &  SC + MMA & EC + MMA & IFCA & FLSC &  FLACC \\ \midrule
    2 & 0.7308 & 0.7312 & 0.7545 & 0.7612 & 0.7316  & 0.6283 &  0.6297   \\ \midrule
    4  & \textbf{0.8146} & 0.8145 & 0.7906 & 0.8058 & 0.7663  &  0.7829 & 0.6585 \\ \midrule
    5  & \textbf{0.7920}  & 0.7914  & 0.7608  & 0.7826  & 0.7550  & 0.7504  &  0.6176  \\ \midrule
    6  & \textbf{0.8445}  & 0.8425  & 0.8120  & 0.8368  & 0.8316 & 0.7755 & 0.6018  \\ \midrule
    7  & \textbf{0.7682}  &  0.7674 & 0.7388  & 0.7550  & 0.7450 & 0.7599 &  0.6604 \\ \bottomrule
  \end{tabular}
  }
  \caption{Ablation results for various cluster configurations with stationary cluster distributions; the clusters cumulatively comprising of $100$ clients were generated using $\alpha=0.5$}
  \label{tab:numcluster_acc}
\end{table*}

\subsection{Ablation study for $\lambda_1$ and $\lambda_2$ in strategy 1 for non-stationary experiments on the RTD dataset}

Due to the lack of space in the main section, we defer to this section the results for various values of the parameters controlling the non-stationary in Scenario 1 for our experiments on the RTD dataset. We present the clustering performance, measured by rand score, in Fig. \ref{fig:nonstat10_75}, \ref{fig:nonstat100_85}, and \ref{fig:nonstat100_95} respectively. The advantage of Fed-REACT in correctly identifying the underlying cluster translates to the accuracies for the 10 client and 100 client setting presented in Tables \ref{tab:cluster_acc_10_non_st} and \ref{tab:cluster_acc_100_non_st}, respectively. 

\begin{table*}[htbp!]
  \centering
  \small
  \resizebox{0.9\linewidth}{!}{
  \begin{tabular}{c|c|c|c|c|c|c|c}\toprule
    Setting & Fed - REACT w/ A1 & Fed - REACT w/ A2 &  SC + MMA & EC + MMA & IFCA & FLSC &  FLACC \\ \midrule
     $\lambda_1=0.95,\lambda_2=0.05$  & \textbf{0.925} & 0.894  & 0.848 & 0.848  & 0.889 & 0.906 &  0.882 \\ \midrule
     $\lambda_1=0.90,\lambda_2=0.10$ & \textbf{0.920} & 0.892  & 0.850 & 0.851 & 0.919 & 0.873 & 0.878   \\ \midrule
     $\lambda_1=0.80,\lambda_2=0.20$ & \textbf{0.919} & 0.866 & 0.823 & 0.825  & 0.800 & 0.864 & 0.884  \\ \midrule
     $\lambda_1=0.75,\lambda_2=0.25$ & \textbf{0.911} & 0.820 & 0.809 & 0.810  & 0.778 & 0.846  & 0.872  \\ \midrule
     $\lambda_1=0.85,\lambda_2=0.50$ & 0.920 & 0.881  & 0.808 & 0.808  & \textbf{0.929} & 0.907  & 0.889   \\ \midrule
     $\lambda_1=0.85,\lambda_2=0.33$ & \textbf{0.908} & 0.905  & 0.799 & 0.800 & 0.875 & 0.892  & 0.872   \\ \midrule
     $\lambda_1=0.75,\lambda_2=0.33$ & \textbf{0.904} & 0.797 & 0.776 & 0.775 & 0.875  & \textbf{0.904} & 0.882  \\ \midrule
  \end{tabular}
  }
  \caption{Results on RTD dataset for $\mathbf{10}$ \textbf{clients}, $100$ rounds with non-stationary cluster distributions (Strategy 1)}
  \label{tab:cluster_acc_10_non_st}
\end{table*}

\begin{table*}[htbp!]
  \centering
  \small
  \resizebox{0.9\linewidth}{!}{
  \begin{tabular}{c|c|c|c|c|c|c|c}\toprule
    Setting & Fed - REACT w/ A1 & Fed - REACT w/ A2 &  SC + MMA & EC + MMA & IFCA & FLSC &  FLACC \\ \midrule
     $\lambda_1=0.95,\lambda_2=0.05$  & 0.785  & \textbf{0.796}  & 0.749  & 0.762   & 0.778 & 0.753 & 0.672   \\ \midrule
     $\lambda_1=0.90,\lambda_2=0.10$ & \textbf{0.787} & 0.725 & 0.733 & 0.744   & 0.695  & 0.731  & 0.679    \\ \midrule
     $\lambda_1=0.80,\lambda_2=0.20$ & \textbf{0.780}  & 0.716 & 0.708 &   0.712 & 0.698 & 0.737 & 0.686   \\ \midrule
     $\lambda_1=0.75,\lambda_2=0.25$ & \textbf{0.777}  & 0.774  & 0.706 & 0.708  & 0.710 & 0.772 & 0.703   \\ \midrule
     $\lambda_1=0.85,\lambda_2=0.50$ & 0.786  & \textbf{0.801} & 0.701  & 0.706  & 0.774 & 0.739 & 0.698    \\ \midrule
     $\lambda_1=0.85,\lambda_2=0.33$ & \textbf{0.782} & 0.704 & 0.683 & 0.690   & 0.727 & 0.724 & 0.723    \\ \bottomrule
  \end{tabular}
  }
  \caption{Results on RTD dataset for $\mathbf{100}$ \textbf{clients}, $200$ rounds with non-stationary cluster distributions (Strategy 1)}
  \label{tab:cluster_acc_100_non_st}
\end{table*}

\begin{figure*}[!htbp]
\centering
  \begin{minipage}[t]{.32\linewidth}
    \includegraphics[width=\linewidth]{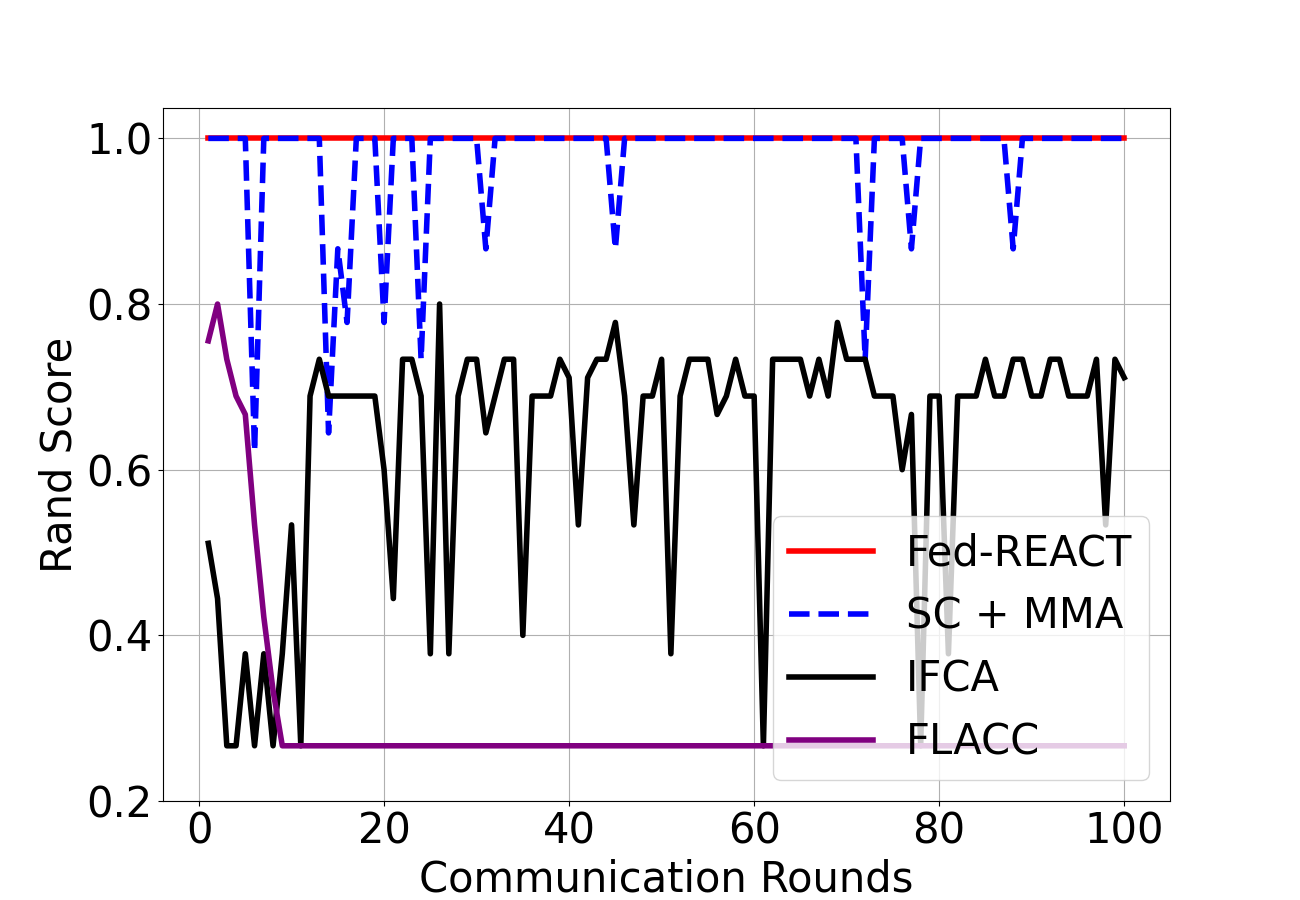}%
    \caption{\small The Rand score of Fed-REACT vs. baseline methods on RTD for Non Stationary Setting (Strategy 1), $10$ clients,  $|\mathcal{M}^k_t|=64$ training samples, $T_{task}=100$., $\lambda_1=0.75, \lambda_2=0.25$}%
    \label{fig:nonstat10_75}
  \end{minipage}\hfil
  \begin{minipage}[t]{.32\linewidth}
    \includegraphics[width=\linewidth]{Non_Stat_0_85_100_clients.png}%
    \caption{\small The Rand score of Fed-REACT vs. baseline methods on RTD for Non Stationary Setting (Strategy 1), $100$ clients,  $|\mathcal{M}^k_t|=64$ training samples, $T_{task}=200$., $\lambda_1=0.85, \lambda_2=0.15$}%
    \label{fig:nonstat100_85}
  \end{minipage}\hfil
  \begin{minipage}[t]{.32\linewidth}
    \includegraphics[width=\linewidth]{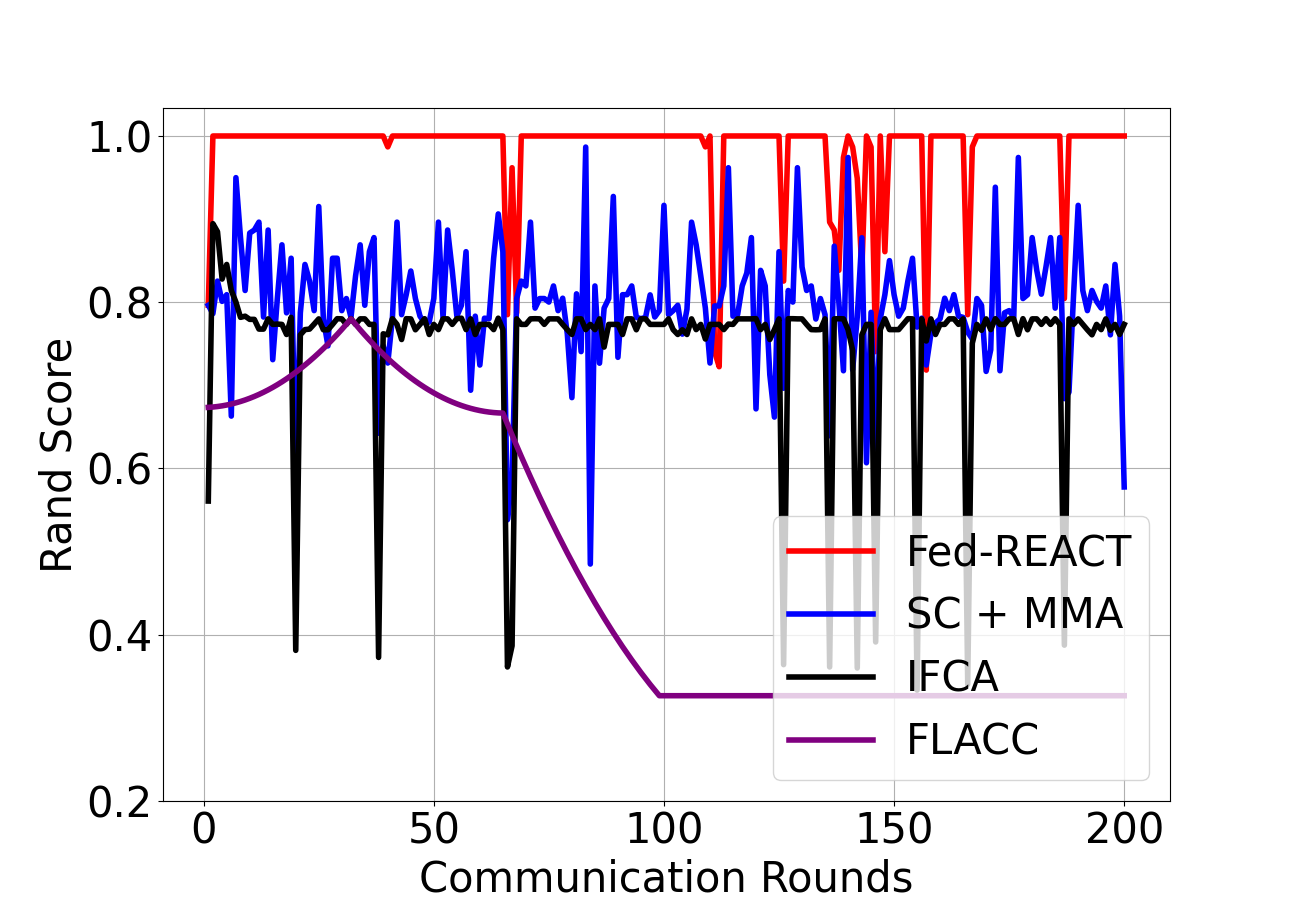}%
    \caption{\small The Rand score of Fed-REACT vs. baseline methods on RTD for Non Stationary Setting (Strategy 1), $100$ clients,  $|\mathcal{M}^k_t|=64$ training samples, $T_{task}=200$., $\lambda_1=0.95, \lambda_2=0.05$}%
    \label{fig:nonstat100_95}
  \end{minipage}%
\end{figure*}

% \section{Robustness of Fed-REACT to non-stationary distributions}

% In this section, we explore the robustness of the proposed method to changing distributions across clusters. To this end, we partition the RTD dataset into three clusters via Dirichlet sampling with $\alpha=0.1$, resulting in clusters with 33, 33, and 34 clients. Then, the distributions for Clusters 1 and 2 are enforced to approach each other before parting away. In particular, let $p_1^{(0)},p_2^{(0)} \in \mathbb{R}^{10}$ denote proportions of each class in Clusters 1 and 2 at the start of the distribution evolution process. We enforce
% \begin{align}
% p_1^{(t)}=p_1^{(0)}+\frac{t}{T_{task}}(p_2^{(0)}-p_1^{(0)}), \\
% p_2^{(t)}=p_2^{(0)}+\frac{t}{T_{task}}(p_1^{(0)}-p_2^{(0)}).
% \end{align}

% The results, presented in Fig. \ref{fig:movingdists}, demonstrate the superiority of the Fed-REACT's evolutionary clustering method over snapshot clustering, and the robustness of the former to slow drifts in distribution.

% \begin{figure}[!htbp]
%     \centering
%     \includegraphics[width=0.65\linewidth]{moving_dists_randscores.png} % Replace "example-image" with your file name
%     \caption{The Rand score vs. the ground truth for Fed-REACT (our method) and the baseline clustering methods in the case of a non-stationary distribution.}
%     \label{fig:movingdists}
% \end{figure}

\section{Experimental results on time-smoothed gradient descent}

The time-smoothed gradient descent algorithm DTSSGD, proposed by (\cite{aydore2019dynamic}), presents a regret framework for non-convex models that deals with the concept drift associated with a dynamic environment. We compare our results with those obtained by training the encoder using DTSSGD. The experiments are conducted on the RTD dataset with ten clients partitioned into 3 clusters created using Dirichlet sampling ($\alpha=0.1$). As before, the encoder was trained for $10$ rounds but with the optimizer set to the one proposed in \cite{aydore2019dynamic}. Training of the output layer consists of a single round involving all the labeled samples available at a client. We vary the parameter $\gamma$ (used to control forgetting) and the smoothing window size $w$. The results are presented in Table \ref{tab:alpha_window}.

\begin{table}[ht]
\centering
\begin{tabular}{c|c|c|c|c}
\toprule
\textbf{$\gamma$} & \textbf{$w = 1$} & \textbf{$w = 3$} & \textbf{$w = 5$} & \textbf{$w = 7$} \\ \hline
0.7           & 0.988             & 0.984        & 0.988            & 0.986                 \\ \hline
0.8           & 0.988           & 0.990            & 0.982            & 0.985            \\ \hline
0.9           & 0.988             & 0.980            & 0.990            & 0.990            \\ \bottomrule
\end{tabular}
\caption{Fed-REACT's results using the optimizer from \cite{aydore2019dynamic}}
\label{tab:alpha_window}
\end{table}
%While Theorem~1 states that for a linear feature encoder, a relatively large $w$ leads to convergence, in our simulations, a more complicated encoder is capable of learning meaningful features and using window size $w$ already gives high accuracy. Further increasing $w$ does not lead to significant performance gain and therefore in our experiments, we set $w = 1$.
The results suggest that increasing $w$ does not lead to significant performance gain; therefore, in our experiments we set $w = 1$.

\section{Complexity analysis}
\subsection{Comparison of representation learning schemes against the baselines}
For $L$ layers, a kernel size of $\nu$, a stride of $1$, and 
a timeseries length of $D$, the inference complexity of a 1-D convolutional neural network is $\mathcal{O}(\nu L D)$. Since time dilation simply spaces out the successive neurons sharing the same kernel, the computational complexity remains linear in terms of the sequence length, $\mathcal{O}(\nu L D)$. Attention based PatchTST, on the other hand, scales quadratically with the sequence length $\mathcal{O}(L D^2)$ which makes it inefficient for longer sequences. 

The communication efficiency of the backbone is proportional to the number of parameters for each model, which are presented in Table \ref{tab:model_sizes}. Note that the number of parameters does not reflect the size of the memory required to train the models. The number of layers, kernel sizes, etc. for each of these models were chosen so that NVIDIA A-100 GPUs could be maximally utilized in terms of memory for a federated learning setup with $50$ clients trained using fedml library. 
\begin{table*}[h]
  \centering
  \small
  {
  \begin{tabular}{c|c|c|c|c}\toprule
     & Causal CNN & LSTM &  Patch TST & Times Net \\ \midrule
    Number of Parameters (K) & 156.59 & 398.35  & 131.05 &  89.13  \\ \midrule
    FLOPs (M) & 31.94 & 79.68 & 2.66 & 17.6 \\ \bottomrule
  \end{tabular}
  }
  \caption{Model sizes for various schemes used in our experiments}
  \label{tab:model_sizes}
\end{table*}

Please note that the training in Phase 1 of Fed-REACT is simple Federated Averaging of the encoder at the server end, which scales linearly with the number of clients and model size. Since Fedprox involves a slight reparameterization of Federated Averaging by modifying the loss function at the client side to include a proximal term, the complexity of the two methods are the same. Ditto also modifies the local loss function by regularizing deviation of the local models from the global models. If the local device runs simple SGD on the modified local loss, the complexity of Ditto remains the same as that of Federated Averaging. In APFL, each client maintains three models: global model, local model, and mixed personalized model that is a combination of local and global models. Although this introduces obvious overhead at client side in terms of memory and computation, the overall complexity remains the same as for the vanilla federated averaging.  

\subsection{Evolutionary clustering vs other clustered Federated Learning methods}
Compared to regular snapshot clustering, $\Gamma$ iterations of AFFECT algorithm incur an obvious computational overhead. However, in our experiments we have observed that $\Gamma \ll K$, and the algorithm converges in less than $5$ iterations. Given the clustering solution $\mathcal{S}_t^c$ obtained in the current iteration, $\hat{\mathbb{E}}[[W_t]_{i, j}]$ and $\hat{Var}([W_t]_{i, j})$ are computed in $\mathcal{O}(K^2)$ operations. Likewise, the computation of the forgetting factor given the mean and variances requires $\mathcal{O}(K^2)$ operations. Suppose that the computational complexity of the clustering algorithm employed is $T(K,C)$ operations, then the computational complexity of AFFECT turns out to be $\Gamma T(K,C)+\mathcal{O}(\Gamma K^2)$. Given that the k-means clustering is an NP hard problem, and the time-complexity of Agglomerative Hierarchical Clustering is $\mathcal{O}(K^2 log K)$, the incurred overhead is negligible provided that $\Gamma \ll K$ is satisfied as in our experiments. 

The comparison between evolutionary clustering and the algorithms that cluster at the client side (e.g., IFCA and FLSC) is not straightforward. Instead of traditional clustering based on the weights of the output layer at the server side, the server in the latter schemes shares the weights of all $C$ clusters with each of the client. The clients then evaluate the performance of each of these cluster models and pick the cluster with the lowest loss. Hence, the clustering for these schemes depends on both the size of the task model and the number of clusters. It is clear that both IFCA and FLSC are worse off than Fed-REACT in terms of communication efficiency. For Fed-REACT, the server incurs communication cost of $\mathcal{O}(K |\mathbf{\theta}_{task, t}|)$, while for the former two it incurs a communication cost of $\mathcal{O}(C K |\mathbf{\theta}_{task, t}|)$.

\end{document}